\documentclass[twoside,11pt]{article}
\thispagestyle{empty}
%

\usepackage{jmlr2e}
\usepackage[T1]{fontenc}
\usepackage{verbatim}
\usepackage{amsmath}
\usepackage{kantlipsum}
\allowdisplaybreaks
\usepackage{amssymb}
\usepackage{stackrel}
\usepackage{verbatim}
\usepackage{url}
\usepackage{hyperref}
\usepackage{hyperref}
\hypersetup{
    colorlinks=true,
    linkcolor=black,
    filecolor=magenta,      
    urlcolor=cyan,
    citecolor=orange	
}
\usepackage{graphicx}
\usepackage{enumerate}
\usepackage{subfig}
\usepackage[export]{adjustbox}

\newcommand\numberthis{\addtocounter{equation}{1}\tag{\theequation}}

\newcommand{\lp}{\left(}
\newcommand{\rp}{\right)}

\newcommand{\mbf}{\mathbf}
\newcommand{\mc}{\mathcal}

\newcommand{\mbb}{\mathbb}
\newcommand{\bds}{\boldsymbol}
\usepackage[dvipsnames]{xcolor}
\usepackage[normalem]{ulem}

\newcommand{\ads}[1]{\textcolor{magenta}{\textbf{ADS:} #1}}

\newcommand\redout{\bgroup\markoverwith{\textcolor{red}{\rule[.5ex]{2pt}{0.4pt}}}\ULon}

\DeclareMathOperator*{\argmax}{arg\,max}
\DeclareMathOperator*{\argmin}{arg\,min}

\newcommand*{\figuretitle}[1]{%
    {\centering
    \textbf{#1}
    \par\medskip}
}
\newcommand*{\QEDB}{\null\nobreak\hfill\ensuremath{\square}}%
\newcommand{\model}{\mathrm{M}}
\renewcommand{\P}{\mbb{P}}
\newcommand{\E}{\mbb{E}}



\newcommand{\KL}{\bds{D}_{\text{KL}}}

\newcommand{\tpath}{\mathrm{path}}

\newcommand{\bx}{\mbf{x}}
\newcommand{\bX}{\mbf{X}}
\newcommand{\bnX}{\mbf{Y}} 
\newcommand{\bnx}{\mbf{y}}

\newcommand{\np}{\p_{\dagger}}
\newcommand{\T}{\mathrm{T}}
\newcommand{\p}{\mathrm{p}}
\newcommand{\TCL}{\T^\mathrm{CL}}
\newcommand{\TCLn}{\T^\text{CL}_{\dagger}}
\newcommand{\n}{n}
\newcommand{\nn}{n_{\dagger}}

\newcommand{\eps}{\epsilon}

\newcommand{\treedistr}{\mc{P}_{\T}}

\newcommand{\Vast}{\bBigg@{5}}





\newcommand{\DKL}{D_{\text{KL}}}

\newcommand{\Q}{\mathrm{Q}}
\newcommand{\PG}{\mathrm{P}}
\newcommand{\Ith}{\mathbf{I}^o}
\newcommand{\MI}{\hat{I}}
\newcommand{\fset}{\mc{E}\mc{V}^2}

\newcommand{\evenint}{\mc{I}_{p-1}}
\newcommand{\indep}{\perp \!\!\! \perp}


\usepackage{algorithm}
\usepackage[noend]{algpseudocode}
\usepackage{float}
\usepackage{bbm}
\usepackage{xcolor}
\usepackage{dsfont}
\usepackage{lastpage}

\usepackage{enumitem}
\setitemize{noitemsep}


\ShortHeadings{Optimal Rates for Learning Hidden Tree Structures}{Nikolakakis, Kalogerias and Sarwate}
\firstpageno{1}


\newtheorem{assumption}{Assumption}

\begin{document}

\title{Optimal Rates for Learning Hidden Tree Structures
}


\author{\name Konstantinos E. Nikolakakis \email k.nikolakakis@rutgers.edu \\
       \addr Department of Electrical \& Computer Engineering\\
       Rutgers, The State University of New Jersey \\
	   94 Brett Road, Piscataway, NJ 08854, USA
       \AND
       \name Dionysios S. Kalogerias \email dionysis@msu.edu \\
       \addr  Department of Electrical \& Computer Engineering\\
       Michigan State University\\
       428 S. Shaw Lane, MI 48824, USA
       \AND 
       \name Anand D. Sarwate \email anand.sarwate@rutgers.edu \\
       \addr Department of Electrical \& Computer Engineering\\
       Rutgers, The State University of New Jersey \\
	   94 Brett Road, Piscataway, NJ 08854, USA}


\maketitle

\begin{abstract}%
We provide high probability finite sample complexity guarantees for hidden non-parametric structure learning of tree-shaped graphical models, whose hidden and observable nodes are discrete random variables with either finite or countable alphabets. We study a fundamental quantity called the \textit{(noisy) information threshold}, which arises naturally from the error analysis of the Chow-Liu algorithm and, as we discuss, provides explicit necessary and sufficient conditions on sample complexity, by effectively summarizing the difficulty of the tree-structure learning problem. Specifically, we show that the finite sample complexity of the Chow-Liu algorithm for ensuring exact structure recovery from noisy data is inversely proportional to the information threshold squared (provided it is positive), and scales almost logarithmically relative to the number of nodes over a given probability of failure. Conversely, we show that, if the number of samples is less than an absolute constant times the inverse of information threshold squared, then no algorithm can recover the hidden tree structure with probability greater than one half. As a consequence, our upper and lower bounds match with respect to the information threshold, indicating that it is a fundamental quantity for the problem of learning hidden tree-structured models. Further, the Chow-Liu algorithm with noisy data as input achieves the optimal rate with respect to the information threshold. Lastly, as a byproduct of our analysis, we resolve the problem of tree structure learning in the presence of \textit{non-identically} distributed observation noise, providing conditions for convergence of the Chow-Liu algorithm under this setting, as well.
\end{abstract}
\begin{keywords}
Tree-structured graphical models, Chow-Liu algorithm, Information Threshold, Hidden Markov models, Noisy data
\end{keywords}


\section{Introduction}
%
%
%
%

Graphical models are a widely-used and powerful tool for analyzing high-dimensional structured data \citep{lauritzen1996graphical,koller2009probabilistic}. In those models, variables are represented as nodes of a graph, the edges of which indicate conditional dependencies among the corresponding nodes. In this paper, we consider the problem of \textit{learning acyclic undirected graphs} or \textit{tree-structured Markov Random Fields (MRFs)}. In particular, we provide optimal rates for the problem of learning tree structures under minimal assumptions, when noisy data are available. First, we analyze the sample complexity of the well-known \textit{Chow-Liu (CL) Algorithm} \citep{chow1968approximating}, which, given a data set of samples, returns an estimate of the original tree. In the absence of noise the importance of the CL algorithm  stems from its efficiency, its low computational complexity, and also its optimality in terms of sample complexity (matching information-theoretic limits) for a variety of statistical settings, e.g., such as those involving  Gaussian data \citep{tan2009learning}, binary data \citep{bresler2020learning}, as well as for non-parametric models with discrete alphabets \citep{tan2011large,tan2011learning}. Second, the CL algorithm with noisy data as input maintains its computational efficiency, and as we show, achieves optimal sample complexity rates for certain cases of hidden tree-structured models

 In many applications, the underlying physical or artificial phenomenon may be well-modeled by a (tree-structured) MRF, but the data acquisition device or sensor may itself introduce noise. We wish to understand how sensitive the performance of an algorithm is to noisy inputs. The problem of learning hidden structures receives significant attention the last few years. Prior or concurrent works include partial~\cite{tandon2020exact,katiyar2020robust,tandon2021sga,casanellas2021robust,katiyar2021robust}, as well as exact structure recovery~\citep{goel2019learning,nikolakakis2019learning,nikolakakis2021predictive}. Specifically, ~\cite{goel2019learning,nikolakakis2019learning} study the impact of corrupted observations on the exact recostruction for both binary and Gaussian models.~\cite{goel2019learning} extend the Interaction Screening Objective~\citep{vuffray2016interaction} for the case of Ising models, while our prior work~\citep{nikolakakis2019learning} analyzes the performance of the CL algorithm for both noisy Ising and Gaussian models. 
 Based on this prior work, it is tempting to think that one can successfully perform structure recovery by running the standard CL algorithm \textit{directly} on noisy data, for general hidden tree-models and minimal assumptions. 


Our contribution centers on a rigorous characterization of
a fundamental model-dependent statistic, which we call the \textit{information threshold} (denoted by $\Ith$ and $\Ith_{\dagger}$ for the noiseless and noisy settings, respectively). 
For the problem of learning tree-structures from noiseless data, the information threshold has already appeared in fundamental prior work in the area (e.g., see prior work by \cite{tanlargeIEEE} and \cite{tan2011largethesis}). 
Precisely, Theorem 6 by~\citet{tanlargeIEEE} states that if the information threshold is strictly positive, then the probability of incorrect structure recovery by the CL algorithm decays exponentially with respect to the number of samples. 
Specifically,~\cite{tan2011large,tanlargeIEEE} show that the approximation of the error exponent is consistent with the true error exponent when $\Ith\to 0$. However, no conclusions can be derived based on results from prior works regarding the rate of the sample complexity with respect to $\Ith$ when it is bounded away from zero, or when the data are noisy.

In this paper, we provide finite sample complexity bounds for exact \textit{hidden} tree-structure learning.  Although it is true that the number of samples required for successful recovery scales logarithmically with respect to the number of nodes $p$ (in the asymptotic sense), the sample complexity can vary significantly for different tree-structured distributions or noise models (for fixed $p$), and fixed probability of failure $\delta$. The discussion above naturally raises the following basic question: \textit{Is there any essential statistic (e.g., the information threshold) that determines the sample complexity of the hidden tree-structure learning problem?} 
We show that, indeed, the noisy information threshold constitutes such a representative statistic; in fact, it provides explicit \textit{necessary and sufficient} conditions on sample complexity, by effectively and completely summarizing the difficulty of the hidden tree-structure learning problem. First, for fixed positive values of probability of success and (noisy) information threshold (namely $\Ith_{\dagger}$), we derive finite complexity bounds for exact structure recovery, by exploiting the Chow-Liu algorithm with noisy observations as its inputs. Second, we show that our sample complexity bounds achieve the optimal rate with respect to $\Ith_{\dagger}$.

Our contributions are summarized as follows.

\begin{figure}[t!]
\centering
  \includegraphics[width=210pt]{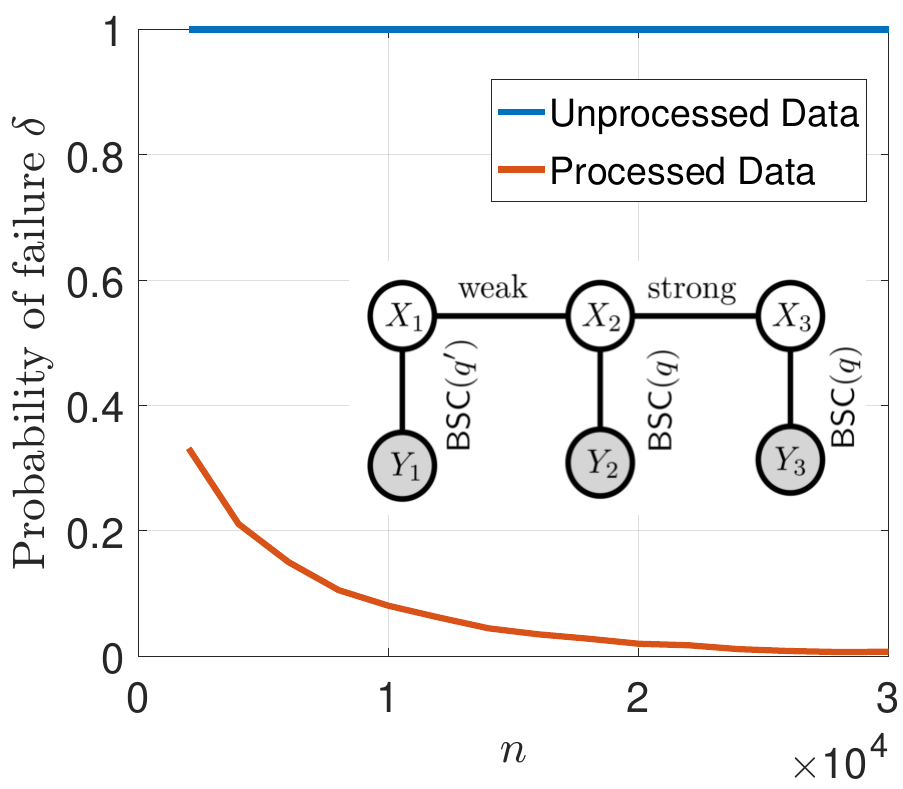}
  \caption{$\delta\equiv\hat{\P}_n\big( \TCL_{\dagger}\neq\T \big)$ for $3$-node hidden structure whose $\Ith_{\dagger}<0$ (non-identically distributed noise). "Weak" and "Strong" refer to correlations between hidden variables: $|\E [X_1 X_2]|< |\E [X_2 X_3]|$.}
  \label{fig:fintro}
 \end{figure}
\setitemize{leftmargin=10pt}
 \setlist{nolistsep}
\begin{itemize}[leftmargin=*,noitemsep]
    \item We provide the first finite sample complexity results on \textit{non-parametric} hidden tree-structure learning, where the distributions of the hidden and observable layers are \textit{unknown} and the mappings between the hidden and observed variables are general. In fact, as with the noiseless case, we show that, as long as $\Ith_{\dagger}>0$, the sample complexity of the CL algorithm with respect to $p/\delta$ scales as $\mathcal{O}\big(\log^{1+\zeta} (p/\delta)\big) $, for any $\zeta>0$.
    With respect to the associated noisy information threshold, the same sample complexity is of the order of $\mc{O}(1/(\Ith_{\dagger})^{2(1+\zeta)})$, for any $\zeta>0$ (Theorem \ref{thm:sufficient_noisy}).
     \item For an absolute constant $C>0$, we show that if the number of samples is less than $C/(\Ith_{\dagger})^2$, then no algorithm can recover the hidden tree structure with probability greater than one half (Theorem \ref{thm:converse_noisy}). The rate of of upper and lower bounds with respect to $\Ith_{\dagger}$ are essentially identical, and CL algorithm in the noisy setting achieves the optimal rate with respect to $\Ith_{\dagger}$.
    \item Additionally, we show that, if $\Ith_{\dagger}\leq 0$, then structure recovery from \textit{raw} data is not possible. As an example of the case $\Ith_{\dagger}\leq 0$ consider the model in Figure \ref{fig:fintro}, for which structure recovery from unprocessed data fails. In some cases where $\Ith_{\dagger}< 0$, by introducing suitable \textit{processing} and enforcing appropriate conditions on the hidden model (Definition \ref{ordering_processed}), we show that the CL algorithm can be made convergent (with the aforementioned sample complexity).
    Such conditions can be satisfied for a variety of interesting observation models.
    Essentially, our results confirm that the CL algorithm is an effective \textit{universal estimator} for tree-structure learning, on the basis of noisy data. We lastly explicitly illustrate how our results capture certain interesting scenarios involving \textit{generalized} $M$-ary erasure and symmetric channels (i.e., observation noise models). Our framework extends and unifies recent results proved earlier for the binary case \citep{bresler2020learning,nikolakakis2019learning}, by considering \textit{non-identically distributed noise} as well. 
    Note that structure learning in the presence of non-identically distributed noise remains open for general graph structures; see prior work by~\citet[Section 6]{goel2019learning}. 
\end{itemize}

\subsection{Related Work}

Structure learning from node observations is a fundamental and well-studied problem in the context of graphical models. For general graph structures, the complexity of the problem has been studied by~\citet{karger2001learning}. Under the assumption of bounded degree the problem becomes tractable, leading to a large body of work in the last decade~\citep{vuffray2016interaction,lee2007efficient,bresler2008reconstruction,ravikumar2010high,bresler2014hardness,bresler2015efficiently,vuffray2019efficient}. These approaches employ greedy algorithms, $l_1$ regularization methods, or other optimization techniques. The sample complexity of each approach is evaluated based on information theoretic bounds~\citep{santhanam2012information,tandon2014information,hamilton2017information}.

For acyclic graphs, the CL algorithm is computationally efficient and it has been shown to be optimal in terms of sample complexity with respect to the number of variables. The error analysis and convergence rates for trees and forests were studied by~\citet{tan2009learning,tan2011large,tan2011learning},~\citet{liu2011forest} and~\citet{bresler2020learning}. In particular, for finite alphabets the number of samples needed by the CL algorithm is logarithmic with respect to the ratio $p/\delta$ for tree structures and polylogarithmic in the case of forests: $\mc{O}\big(\log^{1+\zeta} (p/\delta)\big)$, for all $\zeta>0$~\cite[Theorem 5]{tan2011learning}). Comparatively, our results extend to countable alphabets and noisy observations (hidden models). Our sample complexity bounds are consistent with prior work: the order is poly-logarithmic but remains arbitrarily close to logarithmic ($\mc{O}\big(\log^{1+\zeta} (p/\delta)\big)$, for all $\zeta>0$). In the special case of Ising models, hidden structures were considered by~\citet{goel2019learning} and our previous work~\citep{nikolakakis2019learning,nikolakakis2021predictive}. Goel et al.~\citep{goel2019learning} consider bounded degree graphs and require the noise model (or an approximation of it) to be known with the noise i.i.d. on each variable (see their Section 6). Our results apply more generally to settings with \textit{non-identically distributed noise} (Section \ref{M-ary_arasure}), but are restricted to the tree structure model assumption. Prior work by~\citep{nikolakakis2019learning,nikolakakis2021predictive} solves the problem of hidden tree structure learning for (binary) Ising models with (modulo-2) additive noise, while in this paper we generalize prior sample complexity bounds~\citep{nikolakakis2019learning,nikolakakis2021predictive} to finite and countable alphabets, encompassing non-parametric tree structures and general noise models. Finally,~\cite{tandon2020exact,katiyar2020robust,tandon2021sga,casanellas2021robust,katiyar2021robust} consider the problem of learning hidden tree-structured models within an equivalence class. In contrast with these papers, we study the exact structure recovery instead of that of the partial structure learning.

\subsection{Applications}

Noise-corrupted structured data appear in various applications for major branches of science as physics, computer science, biology and finance. The assumption of discrete and (arbitrarily) large alphabets is suitable for many of these scenarios. Our work is also motivated by a number of emerging applications in machine learning: differential privacy, distributed learning, and adversarial corruption. For example in \emph{local differential privacy}~\citep{Warner1965,EGS03,KLNRS08,KLNRS11,Duchi2013,Guha2017}, one approach perturbs each data sample at the time it is collected: our results therefore characterize the increase in sample complexity for successful structure recovery under a given privacy guarantee. In \emph{distributed learning} problems, communication constraints require data to be quantized, resulting in quantization noise on the samples~(c.f.~\citep{tavassolipour2018learning}). Finally, we might have an \emph{adversarial attack} on the training data: an adversary could corrupt the data-set and make structure learning infeasible. As our example shows, non-i.i.d. corruption can make the information threshold negative, and the CL algorithm will fail without appropriate processing of the mutual information estimates. We continue by introducing the notation that we use in our paper.

\subsection{Notation}
 We denote vectors or tuples by using boldface and we reserve calligraphic face for sets. For an integer $M$, let $[M] \triangleq \{1,2,\ldots M \}$, and let $[M]^2\equiv [M]\times [M]$ denote the corresponding Cartesian product. For an odd natural number $p$, we denote the set of even natural numbes up to $p-1$ as $\evenint \triangleq\{2,4,\ldots, p-1\}$. The indicator function of a set $A$ is denoted by $\mathds{1}_A$. In our graphica models, the cardinality of the set of nodes $\mc{V}$ is assumed to be equal to $p$, $|\mc{V}|= p$. The node variables of the tree are denoted by $\bX=\lp X_1,X_2,\ldots,X_p\rp$. If $\bX$ has a finite alphabet we write $\bX\in [M]^p$, and $\bX\in\mc{X}^p$ for countable size alphabets.  The probability mass function of $\bX$ is denoted by $\p(\cdot)$ and the joint distribution of the pair $X_i,X_j$ by $\p_{i,j} (\cdot,\cdot)$, for any $i,j\in\mc{V}$. $\T=(\mc{V},\mc{E})$ is a tree (acyclic graph) with set of nodes and edges $\mc{V}$ and $\mc{E}$ respectively. We denote the unique set of edges in the path of two nodes $u,\bar{u}\in\mc{V}$ of $\T$ as $\tpath_{\T} (u,\bar{u})$. The neighborhood of a node $\nu\in\mc{V}$ is denoted by $\mc{N}_{\T}(\nu)$. The distribution $\hat{\p}(\cdot)$ is the estimator of a distribution $\p\lp\cdot \rp$. The symbol $\bX^{1:n}$ denotes the dataset of $n$ i.i.d samples of $\bX$. The estimated mutual information of a pair $X_i,X_j$ is $\MI \lp X_i;X_j \rp$ and the resulting tree structure of the CL algorithm is denoted by $\TCL$. The noisy observable is denoted by $\bnX$ and we use the symbol $\dagger$ to distinguish between noiseless quantities and the corresponding noisy quantities, for instance $\TCL_{\dagger}$ is the estimated tree from $\bnX^{1:n}$.

\section{Model and Problem Statement}

First, we provide a complete description of our model including definitions, properties, and assumptions on the underlying distributions.   

\subsection{Tree-Structured and Hidden Tree-Structured Models}\label{model_formulation}

 We consider graphical models over $p$ nodes with variables $\{X_1,X_2,\ldots,X_p\}\equiv\bX$ and finite or countable alphabet $\mc{X}^p$. We assume that the distribution $\p(\cdot)$ of $\bX$ is given by a tree-structured \textit{Markov Random Field, (MRF)}. Any distribution $\p(\cdot)$ which is Markov with respect to a tree $\T=(\mc{V},\mc{E})$ factorizes as~\citep{lauritzen1996graphical}
	\begin{align}\label{eq:tree_shaped}
	\!\!\!\p(\bx)= \prod_{i\in V} \p\left(x_{i}\right) 
	\prod_{ (i,j) \in \mc{E}} \frac{\p(x_{i},x_{j})}{\p(x_{i}) \p(x_{j})},\:\bx\in\mc{X};
	\end{align} 
we call such distributions $\p (\cdot)$ tree-structured. 

The noisy node variables $\bnX=\{Y_1,Y_2,\ldots,Y_p\}$ are generated by a \emph{randomized mapping} (noisy channel) $\mc{F}:\mc{X}^p \to \mc{Y}^p$. We restrict attention here to mappings that act on each component (not necessarily independently): $\mc{F}(\cdot)=\{F_1(\cdot),F_2(\cdot),\ldots,F_p(\cdot) \}$ and each $F_i(\cdot) : \mc{X} \to \mc{Y}$ so $Y_i = F_i (X_i)$ for all $i\in\mc{V}$. Let $\P \lp Y_i=y_i| X_i=x_i \rp$ be the transition kernel associated with the randomized mapping $F_i(\cdot)$, then the distribution of the output is given by $\p_{\dagger}(y_i)= \sum_{x_i\in\mc{X}}\P(Y_i=y_i|X_i=x_ i)\p(x_i)$ for $y_i\in\mc{Y}$. Note that while the distribution of $\bX$ is tree-structured, the distribution of $\bnX$ \textit{does not factorize} according to any tree. In general the Markov random field of $\bnX$ is a complete graph, which makes learning the hidden structure non-trivial~\citep{nikolakakis2019learning,nikolakakis2021predictive,yeung2019information}.



Given $n$ i.i.d observations $\bX^{1:n} \sim \p(\cdot)$, our goal is to learn the underlying structure $\T$. To do this, we use a plug-in estimate of the mutual information $I(X_i;X_j)$ between pairs of variables. In similar fashion, when noise-corrupted observations $\bnX^{1:n}$ are available, we aim to learn the hidden tree structure $\T$ of $\bX$, by estimating the mutual information $I(Y_i;Y_j)$ between pairs of observable variables. Unfortunately, the plug-in mutual information estimate $\hat{I}(X_i;X_j)$ may converge slowly to $I(X_i;X_j)$ in certain cases with countable alphabets~\cite[Corollary 5]{antos2001convergence},~\citep{silva2018shannon}. To avoid such ill-conditioned cases, we make the following assumption.

\begin{assumption}\label{antos_ass}
 For some $c>1$ there exist $c_1,c_2>0$ such that $ c_1/k^c\leq \p_i (k)\leq c_2 /k^c$, for $k\in \mc{X}$, and $ c_1/(k\ell)^c\leq \p_{i,j} (k,\ell)\leq c_2 /(k\ell)^c$, for $k,\ell\in \mc{X}^2$ and for all $i,j\in\mc{V}$. That is, the tuple $\{c,c_1,c_2\}$ satisfies the assumption for all marginal and pairwise joint distributions.
\end{assumption}

Assumption \ref{antos_ass} holds trivially for all finite and (fixed) alphabets, where the constants $c_1$ and $c_2$ depend on the minimum probability and the size of alphabet.
The next assumption guarantees there is a unique tree structure $\T$ with exactly $p$ nodes.

\begin{assumption}\label{ass:unique_tree}
 $\T$ is connected; $I(X_i;X_j)>0$ for all $i,j\in\mc{V}$ and the distribution $\p (\cdot)$ of $\bX$ is not degenerate.
\end{assumption}

Assumption \ref{ass:unique_tree} guarantees convergence for the CL algorithm in the absence of noise; $\TCL\to\T$. For a fixed tree $\T$, we use the notation $\treedistr$ to denote the set of structured distributions that factorizes according to a fixed $\T$. The set of all trees on $p$ nodes is denoted by $\mc{T}$, and we call $\mc{P}_{\mc{T}}$ the set of all tree-structured distributions that factorize according to \eqref{eq:tree_shaped} for some $\T\in\mc{T}$.

\subsection{Chow-Liu Algorithm}
\begin{algorithm}[t]
	\caption{Chow-Liu (CL) \label{alg:Chow-Liu}}
	\begin{algorithmic}[1]
    \Require Data set $\mc{D}=\boldsymbol{Z}  \in\mc{Z}^{|\mc{V}|\times n}$
         \State $\hat{\p}_{i,j}(\ell,m)=\frac{1}{n}\sum^{n}_{k=1}\mathds{1}_{\{Z_{i,k}=\ell,Z_{j,k}=m\}},$  $\forall i,j \in \mc{V}$ 
         \State $\MI(Z_i;Z_j)=\sum_{\ell,m} \hat{\p}_{i,j}(\ell,m)\log_2 \frac{\hat{\p}_{i,j}(\ell,m)}{\hat{\p}_{i}(\ell)\hat{\p}_{j}(m)}$
\State  $\T^{\text{estimate}} \gets$ MST$\lp  \{\MI(Z_i;Z_j):i,j\in\mc{V} \} \rp$
    \end{algorithmic}
\end{algorithm}
We consider the classical version of the algorithm~\citep{chow1968approximating}, due to the non-parametric nature of our model. Given $n$ i.i.d samples of the node variables, we first find the estimates of the pairwise joint distributions and then we evaluate the plug-in mutual information estimates. Finally, a Maximum Spanning Tree (MST) algorithm (for instance Kruskal's or Prim's algorithm) returns the estimated tree. For the rest of the paper, we refer to Algorithm \ref{alg:Chow-Liu} by explicitly mentioning the input data set; if $\mc{D}=\bX^{1:n}$, then the input consist of noiseless data and we consider $\boldsymbol{Z}\equiv \bX$ (see Algorithm \ref{alg:Chow-Liu}), furthermore the estimated structure $\T^{\text{estimate}}$ is denoted by $\TCL$. Equivalently, if $\mc{D}=\bnX^{1:n}$, then the input consists of noisy data, $\boldsymbol{Z}\equiv\bnX$, and the estimated structure $\T^{\text{estimate}}$ is named as $\TCLn$. We compute $\TCL$ and $\TCLn$ by running the MST algorithm on the edge weights $\{\MI(X_i;X_j):i,j\in\mc{V} \}$ and $\{\MI(Y_i;Y_j):i,j\in\mc{V} \}$ respectively. Note that the estimates $\TCL$ and $\TCLn$ depend on the value $n$, but for brevity, we write $\TCL\to\T$ and $\TCLn\to\T$ instead of  $\lim_{n\to\infty}\TCL= \T$  and $\lim_{n\to\infty}\TCLn = \T$ respectively. In the next section we show that Chow-Liu algorithm is equivalent to a maximum likelihood estimator:  Algorithm \ref{alg:Chow-Liu} returns the MLE-tree of the projected distribution of the observables onto the space of tree-structured distributions.

\subsection{Projected Maximum Likelihood Estimate }
We now show the connection between the maximum likelihood estimate tree and the output $\TCLn$ of Chow-Liu algorithm with input noisy data. As a first step, we state the following Lemma, that generalizes the result on Ising model distribution by \citet[Lemma 1]{bresler2020learning} to general distributions.\footnote{We are unaware of a prior proof of the following result and so we provide it for completeness.} 

\begin{lemma}\label{lemma:projection}
Let $\PG (\cdot)$ be a distribution that factorizes according a graph $\mathrm{G}=(\mc{V},\mc{E}_{\mathrm{G}})$, and $\Q(\cdot)\in\mathcal{P}_{\mc{T}}$ a tree structured distribution with $\T=(\mc{V},\mc{E})$, such that $P(\cdot)$ is absolutely continuous with respect to $\Q(\cdot)$. Let $I_{\PG}(\cdot;\cdot)$ be the mutual information with respect to $\PG(\cdot)$ and consider the projection of $\PG(\cdot)$ onto the space of tree-structured distributions 
\begin{align}
    \p^\mc{T} (\bnx)\triangleq \argmin_{\bar{\Q}(\cdot)\in\mc{P}_{\mc{T}}} \DKL(\PG (\bnx) ||\bar{\Q} (\bnx)).
\end{align} Then
\begin{align}
     \DKL(\PG(\bnx) || \p^\mc{T} (\bnx)) =  -H(\PG (\bnx)) + \sum_{i\in V} H(\PG (y_i))  -	\sum_{ (i,j) \in \mc{E}}I_{\PG}(Y_i;Y_j)  .\label{eq:PMLE2}
\end{align}
\end{lemma}\noindent 
\begin{proof}
It is true that
\begin{align*}
    &\DKL(\PG (\bnx) ||\Q (\bnx))\\&=-H(\PG (\bnx)) -\E_{\PG (\bnx)} [\log \Q (\bnX)]\numberthis \label{eq:PMLE1}\\
    &=-H(\PG (\bnx)) -\E_{\PG (\bnx)} \left[ \log \prod_{i\in V} \Q\left(Y_{i}\right) +\log
	\prod_{ (i,j) \in \mc{E}} \frac{\Q(Y_{i},Y_{j})}{\Q(Y_{i}) \Q(Y_{j})}\right]\\
	&=-H(\PG (\bnx)) -\sum_{i\in V}\E_{\PG (\bnx)} \left[ \log  \Q\left(Y_{i}\right)\right] -	\sum_{ (i,j) \in \mc{E}}\E_{\PG (\bnx)}\left[ \log
 \frac{\Q(Y_{i},Y_{j})}{\Q(Y_{i}) \Q(Y_{j})}\right]\\
 &=  -H(\PG (\bnx))  -\sum_{i\in V}\E_{\PG (\bnx)} \left[ \log  \Q\left(Y_{i}\right)\right]    -\sum_{(i,j) \in \mc{E}}\E_{\PG (\bnx)} \left[ \log  \frac{\PG(Y_{i})\PG(Y_{j})}{\Q(Y_{i})\Q(Y_{j})}\right] \\
 &\quad +\sum_{(i,j) \in \mc{E}}\E_{\PG (\bnx)} \left[ \log  \PG(Y_{i})\PG(Y_{j})\right] -	\sum_{ (i,j) \in \mc{E}}\E_{\PG (\bnx)}\left[ \log
 \Q(Y_{i},Y_{j})\right]\\
  &=  -H(\PG (\bnx))  -\sum_{i\in V}\E_{\PG (\bnx)} \left[ \log  \Q\left(Y_{i}\right)\right]    -\sum_{(i,j) \in \mc{E}}\DKL\lp\PG(y_{i}) || \Q(y_{i}) \rp+\DKL\lp\PG(y_{j}) || \Q(y_{j}) \rp \\
    &\quad +
    \sum_{(i,j) \in \mc{E}}\E_{\PG (\bnx)} \left[ \log  \PG(Y_{i})\PG(Y_{j})\right]-	\sum_{ (i,j) \in \mc{E}}\E_{\PG (\bnx)}\left[ \log
 \Q(Y_{i},Y_{j})\right] \\
 &= -H(\PG (\bnx))  -\sum_{i\in V}\E_{\PG (\bnx)} \left[ \log  \Q\left(Y_{i}\right)\right]+	\sum_{ (i,j) \in \mc{E}} H(\PG (y_i,y_j)) - H(\PG (y_i))-H(\PG (y_j)) \\
    &\quad + 	
    \sum_{ (i,j) \in \mc{E}} \DKL \lp \PG (y_i,y_j)||  \Q(y_{i},y_{j})\rp
    -\sum_{(i,j) \in \mc{E}}\DKL\lp\PG(y_{i}) || \Q(y_{i}) \rp+\DKL\lp\PG(y_{j}) || \Q(y_{j}) \rp\\
 &= -H(\PG (\bnx))   +\sum_{i\in\mc{V}} \DKL(\PG (y_{i}) ||\Q (y_{i}) + \sum_{i\in\mc{V}} H(\PG (y_{i}) -	\sum_{ (i,j) \in \mc{E}} I_{\PG}(Y_i;Y_j) \\
    &\quad + 	\sum_{ (i,j) \in \mc{E}}  \DKL \lp \PG (y_i,y_j)||  \Q(y_{i},y_{j})\rp -\sum_{(i,j) \in \mc{E}}\DKL\lp\PG(y_{i}) || \Q(y_{i}) \rp+\DKL\lp\PG(y_{j}) || \Q(y_{j}) \rp. \numberthis \label{eq:equiv_problem_1stpart}
\end{align*} 
At this point the following notation is necessary (see \citep{koski2010lectures}):  Let $(i_r)^{p}_{r=1}$ be an arbitrary permutation of $\bds{\ell}=\{1,2,\ldots,p\}$. Then, the set of edges $\mc{E}$ for any tree can be written  as\begin{align}\label{eq:dependence_structure}
\mc{E}=\lp i_r,j_r \rp^p _{r=2},\quad \text{ and }j_1=\emptyset,\quad j_r \in \{i_1,\ldots,i_{r-1} \}\subset\bds{\ell}.  
\end{align} \eqref{eq:dependence_structure} defines a tree $\T=(\mc{V},\mc{E})$ with root the node $i_1$ (since $j_1=\emptyset$). Then we write \eqref{eq:equiv_problem_1stpart} as
\begin{align*}
    &\DKL(\PG (\bnx) ||\Q (\bnx))\\
 &= -H(\PG (\bnx))  +\sum^p_{r=1} \DKL(\PG (y_{i_r}) ||\Q (y_{i_r})) + \sum^p_{r=1} H(\PG (y_{i_r}))   -	\sum_{ (i,j) \in \mc{E}} I_{\PG}(Y_i;Y_j) \\
 &\quad + 	\sum^p_{ r=2}  \DKL \lp \PG (y_{i_r},y_{j_r} )  || \Q(y_{i_r},y_{j_r})\rp  -\sum^p_{ r=2}\DKL\lp\PG(y_{i_r})||\Q(y_{i_r}) \rp-\sum^p_{r=2} \DKL\lp\PG(y_{j_r})||\Q(y_{j_r}) \rp\\
  &= -H(\PG (\bnx))  + \DKL(\PG (y_{i_1}) ||\Q (y_{i_1})) + \sum^p_{r=1} H(\PG (y_{i_r}))   -	\sum_{ (i,j) \in \mc{E}} I_{\PG}(Y_i;Y_j) \\
 &\quad + 	\sum^p_{ r=2}  \DKL \lp \PG (y_{i_r},y_{j_r} ) || \Q(y_{i_r},y_{j_r})\rp  -\sum^p_{r=2} \DKL\lp\PG(y_{j_r})||\Q(y_{j_r}) \rp\label{eq:equiv_problem_2ndpart}\numberthis
\end{align*}

\noindent Notice that \begin{align}
     \DKL(\PG (y_{i_1}) ||\Q (y_{i_1}))\geq 0 \label{eq:minimization_KL}
\end{align} and \begin{align}
    \sum^p_{ r=2}  \DKL \lp \PG (y_{i_r},y_{j_r} ) ||  \Q(y_{i_r},y_{j_r})\rp  -\sum^p_{r=2} \DKL\lp\PG(y_{j_r}) || \Q(y_{j_r}) \rp \geq 0,\label{eq:minimization_KLsums}
\end{align} because of the monotonicity property of the KL divergence. Thus to minimize \eqref{eq:equiv_problem_2ndpart} we need the equality to hold in \eqref{eq:minimization_KL} and \eqref{eq:minimization_KLsums}. Under the choice of $\Q(\cdot)$ as the projection onto the space of tree structured distributions as $ \p^\mc{T} (\bnx)\triangleq \argmin_{\bar{\Q}(\cdot)\in\mc{P}_{\mc{T}}} \DKL(\PG (\bnx) ||\bar{\Q} (\bnx))$, we get the matching on the distribution of the root $\PG (y_{i_1}) = \Q (y_{i_1})$ and the matching on the pairwise marginal distributions $\PG (y_{i_r},y_{j_r})\equiv  \Q(y_{i_r},y_{j_r})$ for all $r \in \{2,3,\ldots,p\}$, or equivalently $\PG (y_i,y_j)\equiv  \Q(y_{i},y_{j})$ for all $(i,j) \in \mc{E}$ (see \eqref{eq:dependence_structure}). The latter makes the KL divergences equal to zero in \eqref{eq:minimization_KL} and \eqref{eq:minimization_KLsums}, and \eqref{eq:equiv_problem_2ndpart} directly gives \eqref{eq:PMLE2}.
\end{proof}

  The true distribution of the observable layer $\bnX$ and the estimated distribution $\hat{\p} (\cdot)$ 
   from $n$ i.i.d noisy samples $\{\bnX^{(1)},\bnX^{(2)},\ldots,\bnX^{(n)}\}=\mc{D}$ do not factorize according to any tree. Although the noisy data do not come from a tree-structured distribution (in contrast with prior work \citep{tan2011large}), the approach by~\citet[Section III.]{tan2011large} can be extended as follows.  Given $n$ i.i.d noisy samples $\mc{D}$ from a hidden tree-structured model, Chow-Liu algorithm (Algorithm \ref{alg:Chow-Liu}) with input $\mc{D}$ returns the maximum likelihood estimate tree of the projected distribution
\begin{align}
    \hat{\p}^\mc{T} (\bnx)\triangleq \argmin_{\bar{\Q}(\cdot)\in\mc{P}_{\mc{T}}} \DKL(\hat{\p} (\bnx) ||\bar{\Q} (\bnx)).
\end{align} 

\begin{corollary}\label{cor:PMLE} The maximum likelihood estimate tree of $ \hat{\p}^\mc{T} (\bnx)$ is equivalent with the output $\TCLn$ of Chow-Liu algorithm, that is,
\begin{align}
     \hat{\T}_{\textup{PMLE}}\triangleq  \argmax_{\T\in\mc{T}} \log \prod^n_{s=1} \hat{\p}^\mc{T} (\bnX^{(s)})=\argmax_{\T\in\mc{T}}  \sum_{(i,j)\in\mc{E}_{\T}} \hat{I}(Y_i;Y_j) \equiv \TCLn.
\end{align}
\end{corollary} 

\begin{proof}
Starting from the definition of $\hat{\T}_{\text{PMLE}}$
\begin{align*}
   \hat{\T}_{\text{PMLE}}\triangleq \argmax_{\T\in\mc{T}} \log \prod^n_{s=1} \hat{\p}^\mc{T} (\bnX^{(s)})&=\argmax_{\T\in\mc{T}} \sum_{\bnx\in\mc{Y}} \frac{1}{n}\sum^n_{s=1}\mathds{1}_{\bnX^{(s)}=\bnx} \log \hat{\p}^\mc{T} (\bnx)\\ &=\argmax_{\T\in\mc{T}} \sum_{\bnx\in\mc{Y}}  \hat{\p} (\bnx) \log \hat{\p}^\mc{T} (\bnx)
   \\&= \argmax_{\T\in\mc{T}}\E_{\hat{\p} (\bnx)} [\log \hat{\p}^\mc{T} (\bnX)]\numberthis \label{eq:TPMLE1}. 
\end{align*} We apply Lemma \ref{lemma:projection} by replacing  $\PG(\bnx)$ with $\hat{\p}(\bnx)$ and $\Q(\bnx)$ with $ \hat{\p}^\mc{T} (\bnx)$. Then \eqref{eq:TPMLE1} through \eqref{eq:PMLE1} gives \begin{align}
     \hat{\T}_{\text{PMLE}}  = \argmax_{\T\in\mc{T}}\E_{\hat{\p} (\bnx)} [\log \hat{\p}^\mc{T} (\bnX)]= \argmin_{\T\in\mc{T}}  \DKL(\hat{\p}(\bnx) || \hat{\p}^\mc{T} (\bnx)),\label{eq:tpmleKL}
\end{align} and \eqref{eq:PMLE2} is written as \begin{align}
     \DKL(\hat{\p}(\bnx) || \hat{\p}^\mc{T} (\bnx)) =  -H(\hat{\p}(\bnx)) + \sum_{i\in V} H(\hat{\p} (y_i))  -	\sum_{ (i,j) \in \mc{E}}\hat{I}(Y_i;Y_j).\label{eq:KLMI}
\end{align} Finally, \eqref{eq:tpmleKL} and \eqref{eq:KLMI} give $\hat{\T}_{\text{PMLE}}=\argmax_{\T\in\mc{T}}  \sum_{(i,j)\in\mc{E}_{\T}} \hat{I}(Y_i;Y_j)$, and the latter is equivalent with the output $\TCLn$ of Chow-Liu algorithm.\end{proof}

Although Corollary \ref{cor:PMLE} indicates optimality of the algorithm in some cases, it is well-known that under the presence of noise an MLE approach may be non-robust. To mitigate the effect of noise extra steps should be considered such as pre-processing, statistical learning of the noise by using pilot samples, or detecting and rejecting bad samples; see also~\citet[page 62]{zoubir2012robust}. When the CL algorithm is not consistent, we later discuss how appropriate processing procedures can provide a consistent variation of the algorithm for specific hidden models (Section \ref{sec:processing}).

\subsection{Analysis of the Error Event}
We continue by analyzing the event of incorrect reconstruction $\TCLn\neq \T$ (or $\TCL\neq \T$), which yields a sufficient condition for exact structure recovery.
\begin{proposition}\label{error_char}
The estimated tree $\TCLn\neq \T$ if and only if there exist two edges $e\equiv\left(w,\bar{w}\right)\in\T$ and $g\equiv (u,\bar{u})\in\TCLn$ such that  $e\notin\TCLn$, $g\notin\T$ and $e\in\tpath_{\T}\left(u,\bar{u}\right)$. Then also $g\in\tpath_{\TCLn}\left(w,\bar{w}\right)$.
\end{proposition}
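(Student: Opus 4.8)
The plan is to prove both directions of the biconditional using the fundamental exchange property of spanning trees, which says that adding any non-tree edge $g=(u,\bar u)$ to a spanning tree $\T$ creates a unique cycle, and that cycle consists of $g$ together with the edges on the path $\tpath_{\T}(u,\bar u)$. The key structural fact I will rely on is that $\TCL$ and $\T$ are both spanning trees on the same $p$ nodes, so whenever $\TCL\neq\T$ there must exist edges that are present in one but absent in the other. I would first establish the easy direction, and then tackle the harder converse via a careful cycle-and-exchange argument.

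For the reverse direction (assuming the edge configuration exists, conclude $\TCL\neq\T$): this is immediate, since if there is an edge $e\in\T$ with $e\notin\TCL$, then by definition the two edge sets differ, so $\TCL\neq\T$. The substantive content is the forward direction. Assuming $\TCL\neq\T$, since both are spanning trees with the same number of nodes, they have the same number of edges, and therefore there must exist at least one edge $g=(u,\bar u)\in\TCL$ with $g\notin\T$ (if every $\TCL$-edge were a $\T$-edge, the two equal-cardinality edge sets would coincide). I would then add $g$ to $\T$: because $\T$ is a tree and $u,\bar u$ are already connected in $\T$ through the unique path $\tpath_{\T}(u,\bar u)$, adding $g$ creates exactly one cycle, namely $\{g\}\cup\tpath_{\T}(u,\bar u)$. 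The crucial observation is that this path must contain at least one edge $e$ that is \emph{not} in $\TCL$: otherwise every edge of $\tpath_{\T}(u,\bar u)$ would lie in $\TCL$, and together with $g\in\TCL$ this would mean $\TCL$ contains the full cycle, contradicting that $\TCL$ is acyclic. This yields $e\in\T$, $e\notin\TCL$, and $e\in\tpath_{\T}(u,\bar u)$, establishing the forward implication.

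The final clause, $g\in\tpath_{\TCL}(w,\bar w)$, I would prove by a symmetric exchange argument applied now to $\TCL$. Having fixed $e=(w,\bar w)\in\T$ with $e\notin\TCL$, I add $e$ to $\TCL$; since $\TCL$ is a spanning tree, $w$ and $\bar w$ are already joined by the unique path $\tpath_{\TCL}(w,\bar w)$, so adding $e$ creates the unique cycle $\{e\}\cup\tpath_{\TCL}(w,\bar w)$. I then argue that $g$ must lie on this path. Intuitively, $g$ is the edge whose removal from $\TCL$ (and replacement by $e$) is dual to the removal of $e$ from $\T$; because $e$ lies on the $\T$-path between $u$ and $\bar u$ (the endpoints of $g$), cutting $e$ separates $u$ from $\bar u$ in $\T$, and the correspondence between the two fundamental cycles forces $g$ onto $\tpath_{\TCL}(w,\bar w)$. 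Concretely, removing $e$ from $\T$ partitions $\mc{V}$ into two components with $u,\bar u$ on opposite sides; since $\TCL$ connects these two components and $g$ bridges exactly $u$ and $\bar u$, the path in $\TCL$ between $w$ and $\bar w$ must cross back using $g$.

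I expect the main obstacle to be making the last clause fully rigorous rather than merely plausible: the duality between the two fundamental cycles is intuitively clear but requires a clean cut-set argument to pin down. Specifically, the careful step is to show that $e$'s membership in $\tpath_{\T}(u,\bar u)$ forces $g$ into $\tpath_{\TCL}(w,\bar w)$ and not merely somewhere else in $\TCL$. I would handle this by using the cut induced by deleting $e$ from $\T$: the edge $e=(w,\bar w)$ defines a bipartition of the vertex set, $w$ and $\bar w$ lie in opposite parts, and since $e\in\tpath_{\T}(u,\bar u)$, the endpoints $u,\bar u$ of $g$ also lie in opposite parts of this same bipartition. Any path in $\TCL$ from $w$ to $\bar w$ must therefore cross this cut an odd number of times, and a minimality/uniqueness argument on the spanning-tree path identifies $g$ as the crossing edge, completing the proof.
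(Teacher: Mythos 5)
The paper offers no proof of its own for this proposition: it defers to the ``Two trees lemma'' of Bresler and Karzand \cite[Lemma F.1]{bresler2018learning}, so your attempt must be judged against that standard argument. Your reverse direction is fine, and your forward-direction construction --- pick any $g\equiv(u,\bar u)\in\TCL\setminus\T$ (which exists since distinct spanning trees have equal-size edge sets), then note that $\tpath_{\T}\lp u,\bar u\rp$ must contain some $e\notin\TCL$, else $\TCL$ would contain a cycle --- correctly produces a pair satisfying $e\in\T$, $e\notin\TCL$, $g\in\TCL$, $g\notin\T$, and $e\in\tpath_{\T}\lp u,\bar u\rp$. The genuine gap is the final clause. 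For the pair \emph{as you construct it}, the claim $g\in\tpath_{\TCL}\lp w,\bar w\rp$ is false in general, and your cut-set argument cannot rescue it: deleting $e$ from $\T$ does separate $u$ from $\bar u$, and $\tpath_{\TCL}\lp w,\bar w\rp$ must indeed cross that cut, but $\TCL$ may have \emph{several} edges across the cut, and nothing forces the path to cross it through $g$. Concretely, take $\mc{V}=\{1,2,3,4\}$, let $\T$ have edges $(1,2),(2,3),(2,4)$ and $\TCL$ have edges $(1,3),(2,3),(1,4)$. Choosing $g=(1,4)$ and then $e=(1,2)$ (which lies on $\tpath_{\T}\lp 1,4\rp$ and is not in $\TCL$) satisfies all three conditions, yet $\tpath_{\TCL}\lp 1,2\rp$ consists of $(1,3),(3,2)$ and does not contain $g$: the path crosses the cut $\{1\}$ versus $\{2,3,4\}$ through the other crossing edge $(1,3)$. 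So the ``then also'' clause holds only for a suitably \emph{chosen} pair, and your selection procedure does not produce one.

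The repair is to reverse the order of selection so that a single cut certifies both path conditions at once. Fix $e\equiv\lp w,\bar w\rp\in\T\setminus\TCL$; deleting $e$ from $\T$ splits $\mc{V}$ into two components with $w$ and $\bar w$ on opposite sides. The path $\tpath_{\TCL}\lp w,\bar w\rp$ must contain at least one edge $g\equiv\lp u,\bar u\rp$ whose endpoints lie on opposite sides of this cut. By construction $g\in\tpath_{\TCL}\lp w,\bar w\rp$; moreover $g\neq e$ (since $e\notin\TCL$) and the only edge of $\T$ crossing this cut is $e$ itself, hence $g\notin\T$; finally, since $u$ and $\bar u$ lie in different components of $\T$ with $e$ removed, the unique path $\tpath_{\T}\lp u,\bar u\rp$ must use $e$. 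This single construction yields all four conditions simultaneously; it is precisely the symmetric-exchange argument underlying the two trees lemma that the paper cites in place of a proof.
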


Intuitively, exact recovery fails when there is at least one edge in the original tree $\T$ which does not appears in the estimated tree $\TCLn$. We refer the reader to the proof of Proposition \ref{error_char} by~\citet[Appendix F, ``Two trees lemma'', Lemma 9]{bresler2020learning}. For sake of space, we define the set $\fset$.

\begin{definition}[Feasibility set $\fset$] Let $e\equiv\left(w,\bar{w}\right)\in \mc{E}_{\T}$ be an edge and $u,\bar{u}\in\mc{V}_{\T}$ be a pair of nodes such that $e\in\tpath_{\T}\left(u,\bar{u}\right)$ and $|\tpath_{\T}\left(u,\bar{u}\right)|\geq 2$. The set of all such tuples $(e, u,\bar{u})$, is defined as \begin{align}
    \fset\triangleq \{&e,u,\bar{u}\in  \mc{E}_{\T}\times\mc{V}_{\T}\times\mc{V}_{\T} :e\in \tpath_{\T}(u,\bar{u}) \text{ and }|\tpath_{\T}\left(u,\bar{u}\right)|\geq 2\}.
\end{align}
\end{definition}

For the rest of the paper the pair of nodes $w$, $\bar{w}$ denotes the edge $e\equiv (w,\bar{w})\in\mc{E}_{\T}$. The error characterization of CL algorithm is expressed as follows: if $ \TCLn\neq \T$ then there exists $\lp(w,\bar{w}),u,\bar{u}\rp\in \fset$ such that\footnote{The event $\{\MI\lp Y_w;Y_{\bar{w}} \rp = \MI\lp Y_u;Y_{\bar{u}} \rp\}$ has non zero probability for certain cases, in this situation the MST arbitrarily chooses one of the edges $(w,\bar{w}),$ $(u,\bar{u})$. The choice of $(u,\bar{u})$ yields the error event $\TCLn\neq \T$.}
\begin{align*}
   \MI\lp Y_w;Y_{\bar{w}} \rp \leq \MI\lp Y_u;Y_{\bar{u}} \rp.
\end{align*} By negating the above statement, we get that if $\MI\lp Y_w;Y_{\bar{w}} \rp > \MI\lp Y_u;Y_{\bar{u}}\rp$ for all $\lp(w,\bar{w}),u,\bar{u}\rp\in \fset$ 
then $\TCLn= \T$.  The latter yields a sufficient condition for accurate structure estimation. 

\noindent\textbf{Sufficient condition for exact structure recovery:}
For exact structure recovery we need $\MI\lp Y_w;Y_{\bar{w}} \rp > \MI\lp Y_u;Y_{\bar{u}}\rp$ for all $\lp(w,\bar{w}),u,\bar{u}\rp\in \fset$, or equivalently
    \begin{align}\label{eq:exact_recovery_condition}
    I\lp Y_w;Y_{\bar{w}} \rp - I\lp Y_u;Y_{\bar{u}} \rp  
     > \MI\lp Y_u;Y_{\bar{u}} \rp-I\lp Y_u;Y_{\bar{u}} \rp -\MI\lp Y_w;Y_{\bar{w}} \rp +I\lp Y_w;Y_{\bar{w}} \rp.
    \end{align} 
Inequality \eqref{eq:exact_recovery_condition} allows us to derive a sufficient condition based on error estimates of the mutual information as follows.
\begin{proposition}
If \begin{align} \label{eq:MI_suff_cond2}
    \left|\MI\lp Y_\ell;Y_{\bar{\ell}} \rp -I\lp Y_\ell;Y_{\bar{\ell}} \rp\right|<\frac{1}{2}\min_{(e,u,\bar{u})\in\fset} \left\{I\lp Y_w;Y_{\bar{w}} \rp - I\lp Y_u;Y_{\bar{u}} \rp\right\},
    \end{align}  for all $\ell,\ell'\in \mc{V}$ then $\TCLn=\T$.
\end{proposition}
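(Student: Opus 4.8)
The plan is to derive $\TCL=\T$ directly from the sufficient condition already established in \eqref{eq:exact_recovery_condition}, namely that it suffices to guarantee $\MI\lp X_w;X_{\bar{w}}\rp > \MI\lp X_u;X_{\bar{u}}\rp$ for every tuple $\lp(w,\bar{w}),u,\bar{u}\rp\in\fset$. Writing $\gamma \defeq \min_{(e,u,\bar{u})\in\fset}\{I\lp X_w;X_{\bar{w}}\rp - I\lp X_u;X_{\bar{u}}\rp\}$, which is strictly positive by Assumption \ref{ass:unique_tree} together with the data-processing inequality along paths of $\T$ (every edge $e$ lying on $\tpath_{\T}(u,\bar{u})$ carries strictly larger mutual information than the separated endpoints $u,\bar{u}$), the hypothesis simply reads $|\MI\lp X_\ell;X_{\bar{\ell}}\rp - I\lp X_\ell;X_{\bar{\ell}}\rp| < \gamma/2$ uniformly over all pairs $\ell,\bar{\ell}\in\mc{V}$.

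The key step is to add and subtract the true mutual informations so as to separate the bounded estimation errors from the strictly positive population gap. For any fixed tuple $\lp(w,\bar{w}),u,\bar{u}\rp\in\fset$ I would write
\begin{align*}
\MI\lp X_w;X_{\bar{w}}\rp - \MI\lp X_u;X_{\bar{u}}\rp
&= \lp I\lp X_w;X_{\bar{w}}\rp - I\lp X_u;X_{\bar{u}}\rp \rp \\
&\quad + \lp \MI\lp X_w;X_{\bar{w}}\rp - I\lp X_w;X_{\bar{w}}\rp \rp \\
&\quad - \lp \MI\lp X_u;X_{\bar{u}}\rp - I\lp X_u;X_{\bar{u}}\rp \rp,
\end{align*}
then lower bound the first bracket by $\gamma$ and control the two estimation-error brackets by the triangle inequality, each being at most $\gamma/2$ under the hypothesis. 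This yields $\MI\lp X_w;X_{\bar{w}}\rp - \MI\lp X_u;X_{\bar{u}}\rp > \gamma - \gamma/2 - \gamma/2 = 0$, which is exactly the strict inequality demanded by \eqref{eq:exact_recovery_condition}. Since the tuple was arbitrary, the sufficient condition holds for every element of $\fset$, and therefore $\TCL=\T$.

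This argument is essentially bookkeeping on top of the triangle inequality, so I do not expect a deep obstacle; the one point requiring care is that the error bound must hold \emph{simultaneously} for both the genuine tree edge $(w,\bar{w})$ and the competing chord $(u,\bar{u})$. This is precisely why the hypothesis is stated uniformly over all pairs $\ell,\bar{\ell}\in\mc{V}$ and why the factor $1/2$ appears: the admissible gap $\gamma$ must be split to absorb two estimation errors rather than one. The only ingredient imported from outside the algebra is the strict positivity of $\gamma$, which rests on the non-degeneracy in Assumption \ref{ass:unique_tree}; if one prefers not to invoke the data-processing inequality explicitly, $\gamma>0$ may simply be taken from the definition of $\fset$ as the minimum of a finite collection of strictly positive differences.
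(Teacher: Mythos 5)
Your proof is correct and follows essentially the same route as the paper: you show that the uniform estimation-error bound forces $\MI\lp X_w;X_{\bar{w}}\rp > \MI\lp X_u;X_{\bar{u}}\rp$ for every tuple in $\fset$, which is exactly the paper's chain ``\eqref{eq:MI_suff_cond2} implies \eqref{eq:exact_recovery_condition} implies $\TCL=\T$,'' with the add-and-subtract decomposition and triangle inequality filling in the algebra the paper leaves implicit. The only superfluous element is your appeal to Assumption \ref{ass:unique_tree} for positivity of $\gamma$: the implication needs no such input, since the hypothesis can only be satisfied when $\gamma>0$ (the left-hand side of \eqref{eq:MI_suff_cond2} is nonnegative), so positivity comes for free.
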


\noindent In fact \eqref{eq:MI_suff_cond2} implies \eqref{eq:exact_recovery_condition} and  \eqref{eq:exact_recovery_condition} implies $\TCLn=\T$. Inequality \eqref{eq:MI_suff_cond2} shows that if the error of mutual information estimates is less than a threshold statistic then exact structured recovery is guaranteed.

\subsection{Information Threshold and Properties}

We now define our quantity of interest for tree structured distributions, which we call the \emph{information threshold}. 
As well will see shortly, our sample complexity bounds for exact structure recovery via the CL algorithm depend on the distribution only through the information threshold, $\Ith_{\dagger}$. We first define $\Ith_{\dagger}$ and then show how it affects the difficulty of the structure estimation problem.

\begin{definition}[\textbf{Information Threshold (IT)}] \label{Def:Ith} 
Let $e\equiv\left(w,\bar{w}\right)\in \mc{E}_{\T}$ be an edge and $u,\bar{u}\in\mc{V}_{\T}$ be a pair of nodes such that $e\in\tpath_{\T}\left(u,\bar{u}\right)$. The \textit{information threshold} associated with the model $\np (\cdot)$ (see Section \ref{model_formulation}) is defined as
\begin{align}\label{eq:DI_definition1}
\Ith_{\dagger} \triangleq \frac{1}{2} \min_{(e,u,\bar{u})\in\fset} \lp I\lp Y_w;Y_{\bar{w}} \rp - I\lp Y_u;Y_{\bar{u}} \rp\rp.
\end{align}
\end{definition}\noindent The minimization in \eqref{eq:MI_suff_cond2} and \eqref{eq:DI_definition1} is with respect to the feasible set $\fset$ of the \textit{hidden} tree structure $\T$ of $\bX$. Note that the distribution of $\bnX$ does not factorize according to any tree~\citep{yeung2019information}, thus it is possible to have $\Ith_{\dagger}<0$.

When the data are noiseless, the gap between mutual informations that defines the information threshold will change. If the errors of the mutual information estimates of the noiseless variables satisfy the condition \begin{align} \label{eq:MI_suff_cond3}
    \left|\MI\lp X_\ell;X_{\bar{\ell}} \rp -I\lp X_\ell;X_{\bar{\ell}} \rp\right|<\frac{1}{2}\min_{(e,u,\bar{u})\in\fset} 
        \big( I\lp X_w X_{\bar{w}} \rp - I\lp X_u;X_{\bar{u}} \rp \big),
    \end{align} for all $\ell,\bar{\ell}\in\mc{V}$ then $\TCL=\T$, and \eqref{eq:MI_suff_cond3} is derived similarly to \eqref{eq:MI_suff_cond2}. The definition of the \textit{noiseless information threshold} naturally results from the previous condition.

\begin{definition}[\textbf{Noiseless IT}]\label{def:Ith_noisy}
The \textit{noiseless information threshold is defined as}
\begin{align}\label{eq:Ith_noisy}
\Ith \triangleq \frac{1}{2}\min_{(e,u,\bar{u})\in\fset} \big( I\lp X_w;X_{\bar{w}} \rp - I\lp X_u;X_{\bar{u}} \rp \big).
\end{align}
\end{definition}

\noindent Similarly to the noisy case, if 
\begin{align}\label{eq:MI_suff_cond}
    &\left|\MI\lp X_\ell;X_{\bar{\ell}} \rp -I\lp X_\ell;X_{\bar{\ell}} \rp\right|< \Ith \quad\forall \ell,\ell'\in\mc{V}, 
\end{align}
then $\T=\TCL$. The data processing inequality \citep{cover2012elements} shows that $\Ith\geq 0$. 
Also, Assumption \ref{ass:unique_tree} guarantees that $\Ith>0$.

\begin{proposition}[\textbf{Positivity}]\label{prop1}
If Assumption \ref{ass:unique_tree} holds then $\Ith>0$.
\end{proposition}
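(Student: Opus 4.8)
The plan is to reduce the claim to a strict data-processing inequality along each relevant path and then rule out the boundary (equality) case using Assumption~\ref{ass:unique_tree}. Since $\T$ has finitely many nodes, $\fset$ is finite and the minimum in \eqref{eq:DI_definition1} is attained; hence $\Ith>0$ holds if and only if every term is strictly positive, i.e.
\begin{align*}
I\lp X_w;X_{\bar{w}} \rp > I\lp X_u;X_{\bar{u}} \rp \quad\text{for all } \lp(w,\bar{w}),u,\bar{u}\rp\in\fset.
\end{align*}
Fixing one such tuple, I would write the unique path as $u=v_0,v_1,\dots,v_k=\bar{u}$ with $k\geq 2$ and $e=(w,\bar{w})=(v_j,v_{j+1})$. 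By the global Markov property of the tree-structured MRF \eqref{eq:tree_shaped}, removing any node of the path disconnects its two sides, so $X_{v_0},\dots,X_{v_k}$ is a Markov chain and the (non-strict) data-processing inequality gives $I\lp X_u;X_{\bar{u}} \rp \le I\lp X_w;X_{\bar{w}} \rp$, the bound already invoked in the text. The entire content of the proposition is to make this \emph{strict}.

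For strictness I would isolate the extra link guaranteed by $k\geq 2$. Without loss of generality $u\neq w$ (otherwise $\bar{u}\neq\bar{w}$ and the symmetric argument applies), so $w$ separates $u$ from $\bar{w}$ in $\T$ and therefore $X_u\indep X_{\bar{w}}\mid X_w$. Expanding $I\lp X_u,X_w;X_{\bar{w}} \rp$ by the chain rule in two ways yields the identity
\begin{align*}
I\lp X_w;X_{\bar{w}} \rp - I\lp X_u;X_{\bar{w}} \rp = I\lp X_w;X_{\bar{w}}\mid X_u \rp \ge 0,
\end{align*}
while $\bar{w}$ lying on the $u$--$\bar{u}$ path gives $I\lp X_u;X_{\bar{u}} \rp \le I\lp X_u;X_{\bar{w}} \rp$. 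Chaining these, $I\lp X_u;X_{\bar{u}} \rp \le I\lp X_u;X_{\bar{w}} \rp = I\lp X_w;X_{\bar{w}} \rp - I\lp X_w;X_{\bar{w}}\mid X_u \rp$, so the proposition reduces to the single strict inequality $I\lp X_w;X_{\bar{w}}\mid X_u \rp>0$, i.e. to showing $X_w\not\indep X_{\bar{w}}\mid X_u$.

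The hard part is precisely this strict step --- the equality case of the data-processing inequality. I would argue by contradiction: suppose $I\lp X_w;X_{\bar{w}}\mid X_u \rp=0$, i.e. $X_w\indep X_{\bar{w}}\mid X_u$. Combined with the tree relation $X_u\indep X_{\bar{w}}\mid X_w$, both $X_u$ and $X_w$ individually screen $X_{\bar{w}}$ off from the other, so $\P\lp X_{\bar{w}}\mid X_u \rp=\P\lp X_{\bar{w}}\mid X_w \rp$ on the joint support of $\lp X_u,X_w\rp$. By the standard double-Markov / common-coarsening lemma, this is possible only if the dependence of $X_{\bar{w}}$ on $\lp X_u,X_w\rp$ is carried entirely by a nontrivial coarsening common to $X_u$ and $X_w$; absent such a coarsening the joint support is connected and one is forced to $X_{\bar{w}}\indep X_w$, contradicting $I\lp X_w;X_{\bar{w}} \rp>0$ from Assumption~\ref{ass:unique_tree}. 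A nontrivial common coarsening is exactly a deterministic relation tying $X_u$ to $X_w$, i.e. a degeneracy of the pairwise law $\p_{u,w}$, which is excluded by the non-degeneracy clause of Assumption~\ref{ass:unique_tree}. Making this last implication fully rigorous --- characterizing the double-Markov equality and showing that non-degeneracy (e.g. via a connectivity argument on $\mathrm{supp}\,\p_{u,w}$) rules it out --- is the main obstacle; the remaining steps are routine chain-rule bookkeeping.
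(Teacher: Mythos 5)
Your strategy coincides with the paper's: both proofs reduce positivity to strictness of the data-processing inequality along the minimizing tuple, assume equality for contradiction, convert that equality (via the chain rule identity $I(X_w;X_{\bar w}) - I(X_u;X_{\bar w}) = I(X_w;X_{\bar w}\mid X_u)$, valid because $X_u \indep X_{\bar w}\mid X_w$ on the tree) into a conditional independence, and then derive a contradiction from Assumption~\ref{ass:unique_tree}. One genuine difference in the setup: the paper first invokes the locality property (Proposition~\ref{prop_locality}) to restrict to an adjacent triple $w^*-\bar w^*-\bar u^*$ and only then runs this argument, whereas your chain-rule reduction handles an arbitrary tuple of $\fset$ directly and never needs locality; that part of your proposal is correct and self-contained.

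The gap is in how you finish. Having arrived at the two independences $X_u \indep X_{\bar w}\mid X_w$ and (for contradiction) $X_w \indep X_{\bar w}\mid X_u$, you attempt to rule them out from first principles via a double-Markov/common-coarsening lemma plus a connectivity argument on $\mathrm{supp}\,\p_{u,w}$, and you explicitly leave that lemma unproven (``the main obstacle''). Within the paper's framework this detour is unnecessary: the second independence says precisely that the joint distribution also factorizes as in \eqref{eq:tree_shaped} over the tree in which $\bar w$ is attached to $u$ rather than to $w$, so the distribution would be Markov with respect to two distinct trees on the same $p$ nodes. That directly contradicts the uniqueness of the tree structure which Assumption~\ref{ass:unique_tree} is stipulated to guarantee (this is the role of the ``non-degenerate'' clause, together with $I(X_i;X_j)>0$), and this one-line appeal is exactly how the paper closes its proof. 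So as written your proof is incomplete at the pivotal step, but it is repairable without any new mathematics: invoke Assumption~\ref{ass:unique_tree} at the level of structure uniqueness, not at the level of the pairwise law $\p_{u,w}$. The harder question you raise --- deriving uniqueness itself from a primitive definition of non-degeneracy via the double-Markov characterization --- is a real issue, but it is one the paper assumes away rather than proves, and solving it is not required for this proposition.
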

Since the values $ I (X_i,X_j)$ for $(i,j)\in\mc{E}$ are constant relative to $p$~\citep{tan2011learning}, $\Ith$ does not depend on $p$. The latter holds because of the locality property of $\Ith$.

\begin{proposition}[\textbf{Locality}]\label{prop_locality}
Assume that Assumption \ref{ass:unique_tree} holds. Let $(e^*,u^*,\bar{u}^*)\in\fset$ be a tuple such that \begin{align}
  (e^*,u^*,\bar{u}^*)=\argmin_{((w,\bar{w}),u,\bar{u})\in\fset}  I(X_{w};X_{\bar{w}})-I(X_{u};X_{\bar{u}}),\label{eq:argmin}
\end{align} then $u^*\equiv w^*$ or $u^*\equiv \tilde{w}^*$ and $\bar{u}\in\mc{N}_{\T}(w)$ or $\bar{u}\in\mc{N}_{\T}(\bar{w})$.
\end{proposition}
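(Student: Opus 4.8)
\emph{Plan.} The idea is that the objective $I(X_w;X_{\bar{w}}) - I(X_u;X_{\bar{u}})$ behaves monotonically under \emph{contracting} the pair $(u,\bar{u})$ inward along the path $\tpath_{\T}(u,\bar{u})$: because the tree-structured MRF turns every path into a Markov chain, the data processing inequality forces $I(X_u;X_{\bar{u}})$ to grow as $(u,\bar{u})$ move toward the edge $e$, so the objective can only shrink. The admissibility constraint $|\tpath_{\T}(u,\bar{u})|\ge 2$ encoded in $\fset$ is the only thing that stops the contraction, and it halts it exactly at a two-edge path through $e^*$ --- which is precisely the claimed local configuration.

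\emph{Monotonicity via DPI.} First I would write the path from $u$ to $\bar{u}$ as $u=v_0-v_1-\cdots-v_k=\bar{u}$ with $k=|\tpath_{\T}(u,\bar{u})|\ge 2$, and the edge as $e=(v_i,v_{i+1})$. Since each internal vertex $v_j$ separates $\{v_0,\dots,v_{j-1}\}$ from $\{v_{j+1},\dots,v_k\}$ in $\T$, the global Markov property makes $X_{v_0},X_{v_1},\dots,X_{v_k}$ a Markov chain. Hence for any sub-path $v_a-\cdots-v_b$ with $0\le a\le b\le k$, the chain $X_{v_0}-X_{v_a}-X_{v_b}-X_{v_k}$ together with the data processing inequality gives
\[
 I(X_{v_0};X_{v_k}) \le I(X_{v_a};X_{v_b}).
\]
Moreover, by the chain rule $I(X_{v_a};X_{v_b})-I(X_{v_0};X_{v_k})$ equals a sum of conditional mutual informations, each strictly positive under the non-degeneracy in Assumption~\ref{ass:unique_tree}; thus the inequality is strict whenever the sub-path is proper. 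In words, dropping an endpoint of the pair strictly increases the subtracted term and therefore strictly decreases the objective in \eqref{eq:DI_definition1}.

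\emph{Reduction to a two-edge path.} Now let $(e^*,u^*,\bar{u}^*)$ be a minimizer, with $e^*=(w^*,\bar{w}^*)$ on $\tpath_{\T}(u^*,\bar{u}^*)$. If this path had length $k\ge 3$, then, since $e^*$ uses only two consecutive vertices, at least one extreme vertex $v_0$ or $v_k$ could be removed, producing a tuple still in $\fset$ (the path still contains $e^*$ and still has length $\ge 2$) whose objective is strictly smaller by the previous step --- contradicting minimality. Hence $|\tpath_{\T}(u^*,\bar{u}^*)|=2$. A length-two path containing $e^*$ is either $w^*-\bar{w}^*-\bar{u}^*$ with $\bar{u}^*\in\mc{N}_{\T}(\bar{w}^*)$, or $u^*-w^*-\bar{w}^*$ with $u^*\in\mc{N}_{\T}(w^*)$; in both cases one member of the pair coincides with an endpoint of $e^*$ and the other is a neighbor of an endpoint of $e^*$. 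After the harmless relabeling $u^*\leftrightarrow\bar{u}^*$ (both $\fset$ and the objective are symmetric in the pair), this is exactly the assertion.

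\emph{Main obstacle.} The only delicate point is upgrading the data processing inequality to a \emph{strict} inequality on proper sub-paths, which is what guarantees that \emph{every} minimizer --- not merely some minimizer --- is local. I would establish it by expressing $I(X_{v_a};X_{v_b})-I(X_{v_0};X_{v_k})$ as a telescoping sum of terms of the form $I(X_{v_j};X_{v_b}\mid X_{v_{j'}})$ and arguing each is positive using $I(X_i;X_j)>0$ for all pairs together with non-degeneracy of $\p(\cdot)$. Everything else is finite bookkeeping on the path: checking that the contracted tuples remain in $\fset$ and that a length-two sub-path through $e^*$ always exists once $k\ge 3$. Existence of a minimizer is immediate, since $\fset$ is finite.
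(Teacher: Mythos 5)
Your proof is correct and takes essentially the same route as the paper's: both arguments contract the pair $(u^*,\bar{u}^*)$ toward the edge $e^*$ and invoke the data processing inequality along the Markov chain induced by $\tpath_{\T}(u^*,\bar{u}^*)$, made strict via Assumption~\ref{ass:unique_tree}, to contradict minimality in \eqref{eq:argmin}. The differences are only bookkeeping --- the paper replaces $(u^*,\bar{u}^*)$ in a single step by a pair formed from an endpoint of $e^*$ and a neighbor $\nu$, whereas you peel off one path endpoint at a time and then enumerate the length-two configurations; your telescoping of the mutual-information gap into conditional mutual informations is just an explicit version of the strict-DPI step the paper leaves implicit.
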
 

\noindent We prove Propositions \ref{prop1} and \ref{prop_locality} in Section \ref{Appendix} of the Appendix.

\section{Recovering the Structure from Noisy Data}



We start by developing a finite sample complexity bound for exact structure learning with high probability, when noisy data are available. The structure learning condition in \eqref{eq:MI_suff_cond}, combined with results on concentration of mutual information estimators~\citep{antos2001convergence}, yields the following result.
\begin{theorem}[General Alphabets]\label{thm:sufficient_noisy}
Assume that $\bX\sim \p(\cdot)\in\mc{P}_{\mc{T}}$. Assume that noisy data $\bnX\sim \p_{\dagger} (\cdot)$ are generated by a randomized set of mappings $\mc{F}=\{F_i (X_i)=Y_i:i\in[p]\}$, and $\p_{\dagger} (\cdot)$ satisfies Assumption \ref{antos_ass} for some $c\geq 2,c_1>c_2>0$. Fix a number $\delta\in (0,1)$. If the number of samples of $\bnX$ satisfies the inequalities
\begin{align}\label{eq:sufficient_number_of_samples_q>2}
   \frac{n}{\log^2 n}\geq \frac{\max\{288\log \lp \frac{p}{\delta}\rp,4C^2\}}{\lp\Ith_{\dagger}\rp^2}
\end{align}
for a constant $C>0$, then Algorithm \ref{alg:Chow-Liu} with input $\mc{D}=\bnX^{1:n}$ returns $\TCLn=\T$ with probability at least $1-\delta$.
\end{theorem}

\noindent \textbf{Remarks}: First, the constant $C$ depends on the values of constants $c,c_1,c_2$ which are defined in Assumption \ref{antos_ass}. Specifically, $C=3c_2\big[ c_2 ^{(1-c)/c}+  c^{-1}\int^{\infty}_{c_1} u^{1/c-2}\log\lp eu/c_1\rp +1/c_1\big] $. 
The derivation of $C$ has been given by Antos and Kontoyiannis~\cite[Theorem 7]{antos2001convergence}. 
Additionally, note that $n/\log^2 n=\Omega\lp n^\eps \rp $ for any fixed $\eps\in (0,1)$. Therefore, the required number of samples $n$ with respect to $p$ and $\delta$ and for fixed $\Ith_{\dagger}$ scale as $\mc{O}\big(\log^{1+\zeta} (p/\delta)\big) $, for any choice of $\zeta>0$, whereas, for fixed $p$ and $\delta$, the complexity is of the order of $\Ith_{\dagger}$ is $\mc{O}\big( (\Ith_{\dagger})^{-2(1+\zeta)} \big)$, for any $\zeta>0$.
The proof of Theorem \ref{thm:sufficient_noisy} now follows.

\begin{proof}[Proof of Theorem \ref{thm:sufficient_noisy}]
To calculate the probability of the exact structure recovery we use a concentration inequality quantifying the rate of convergence of entropy estimators from Antos and Kontoyiannis~\citep{antos2001convergence}. In particular, they  show (\hspace{-0.2pt}\cite[Corollary 1]{antos2001convergence}) how the \textit{plug-in} entropy estimator $\hat{H}_n$ (say) is distributed around its mean $\mbb{E}[\hat{H}_n]$: For every $n\in\mbb{N}$ and $\eps>0$,
\begin{align}\label{eq:antos}
\P \left[ \left| \hat{H}_n - \E[ \hat{H}_n] \right|>\epsilon    \right]&\leq 2 e^{-n\epsilon^2/2\log^2 n}.
\end{align}
The plug-in entropy estimator is biased and, actually, $H\geq \mbb{E}[\hat{H}_n]$. Under their Assumption \ref{antos_ass}, in Theorem 7~\citep{antos2001convergence} they characterize the bias as follows. For $c \in [2,\infty)$ (which is the case of interest in this proof),
\begin{align}\label{eq:antos1}
H - \E[ \hat{H}_n]&=\mc{O}\big( n^{-1/2}\log n \big),
\end{align} 
and for $c \in (1,2)$,
    \begin{align}
   H - \E[ \hat{H}_n]&= \Theta \big( n^{\frac{1-c}{c}} \big).
    \end{align}

\noindent Then, for $\eps>Cn^{-1/2}\log n$, it is true that
    \begin{align}\nonumber
    &\P \left[ \left| \hat{H}_n - H] \right|>\epsilon    \right]\\&=\P \left[ \left| \hat{H}_n - \E[ \hat{H}_n]+ \E[ \hat{H}_n]- H] \right|>\epsilon    \right]\nonumber\\
    &\leq \P \left[ \left| \hat{H}_n - \E[ \hat{H}_n]\right|+\left| \E[ \hat{H}_n]- H] \right|>\epsilon\right]\nonumber\\
    &=  \P \left[ \left| \hat{H}_n - \E[ \hat{H}_n]\right|>\epsilon-\left| \E[ \hat{H}_n]- H] \right|\right]\nonumber\\
    &\leq \P \left[ \left| \hat{H}_n - \E[ \hat{H}_n]\right|>\epsilon-C\frac{\log n}{\sqrt{n}}\right]\nonumber\\
    &\leq 2 e^{-n\big(\epsilon-C\frac{\log n}{\sqrt{n}} \big)^2/2\log^2 n}\label{eq:entropy_estimate_bound},
\end{align} 
and the last inequality comes from \eqref{eq:antos} and \eqref{eq:antos1}. Notice that for non-trivial bounds we need the condition $\eps>Cn^{-1/2}\log n$. Further, $\eps$ is free parameter and we choose $\epsilon =\Ith_{\dagger}/3$, driven by property \eqref{eq:MI_suff_cond3}. This requires than $n$ has to be sufficiently large, such that the following inequality holds
    \begin{align}\label{eq:deltaI_inequality}
    \Ith_{\dagger}>3Cn^{-1/2}\log n.
    \end{align} 
Our goal is to find an upper on the probability of the event $\big\{\big|\hat{I}\lp Y_\ell;Y_{\bar{\ell}} \rp -I\lp Y_\ell;Y_{\bar{\ell}} \rp\big|>\Ith_{\dagger} \big\}$. By combining the above and applying the property $I(X;Y)=H(X)+H(Y)-H(X,Y)$ we have
\begin{align*}
&\P \Big[ \Big|\hat{I}\lp Y_\ell;Y_{\bar{\ell}} \rp -I\lp Y_\ell;Y_{\bar{\ell}} \rp\Big|>\Ith_{\dagger} \Big]\nonumber\\
&=\P \Big[ \Big| \hat{H}(Y_\ell)+\hat{H}(Y_{\bar{\ell}})-\hat{H}(Y_\ell,Y_{\bar{\ell}})-H(Y_\ell)-H(Y_{\bar{\ell}})+H(Y_\ell,Y_{\bar{\ell}}) \Big|>\Ith_{\dagger} \Big]\nonumber\\
&\leq \P \Big[ \Big| \hat{H}(Y_\ell)-H(Y_\ell)\Big| +\Big|\hat{H}(Y_{\bar{\ell}})-H(Y_{\bar{\ell}})\Big| +\Big|H(Y_\ell,Y_{\bar{\ell}})-\hat{H}(Y_\ell,Y_{\bar{\ell}})\Big|>\Ith_{\dagger} \Big]\nonumber\\
&\leq \P\! \Bigg[\! \Big\{ \Big| \hat{H}(Y_\ell)-H(Y_\ell)\Big|\!>\!\frac{\Ith_{\dagger}}{3}  \Big\}\!\bigcup\! \Big\{ \Big| \hat{H}(Y_{\bar{\ell}})-H(Y_{\bar{\ell}})\Big|\!>\!\frac{\Ith_{\dagger}}{3} \Big\}\!\bigcup\! \Big\{ \Big|H(Y_\ell,Y_{\bar{\ell}})-\hat{H}(Y_\ell,Y_{\bar{\ell}})\Big|\!>\!\frac{\Ith_{\dagger}}{3} \Big\}\!   \Bigg]\nonumber\\
&\leq 6e^{-n\big(\frac{\Ith_{\dagger}}{3}-C\frac{\log n}{\sqrt{n}}  \big)^2/2\log^2 n},\numberthis\label{eq:bound_on_prob_of_err}
\end{align*} 
where the last inequality is a consequence of \eqref{eq:entropy_estimate_bound}. To guarantee that the condition in \eqref{eq:MI_suff_cond3} holds, we apply the union bound on the events $\big\{ \big|\hat{I}\lp Y_\ell;Y_{\bar{\ell}} \rp -I\lp Y_\ell;Y_{\bar{\ell}} \rp\big|>\Ith_{\dagger}\big\}$, for all $\ell,\bar{\ell}\in\mc{V}$. Since there are $\binom{p}{2}$ pairs we define 
    \begin{align}
    \delta \triangleq  \binom{p}{2} 6 e^{-n\big(\frac{\Ith_{\dagger}}{3} 
    - C\frac{\log n}{\sqrt{n}}  \big)^2/2\log^2 n}.
    \label{eq:expr_delta}
\end{align} 
To conclude, for some fixed $\delta\in (0,1)$ if 
    \begin{align}\label{eq:proof_th1}
    \frac{n}{\log^2 n}\geq \frac{2\log \Big( \frac{6 \binom{p}{2} }{\delta} \Big)}{\Big(\frac{\Ith_{\dagger}}{3}-C\frac{\log n}{\sqrt{n}} \Big)^2} \quad\text{ and }\quad \Ith_{\dagger}>3C\frac{\log n}{\sqrt{n}},
\end{align} 
then the probability of exact recovery is at least $1-\delta$. The latter combined with the inequalities $8\log \lp p/\delta\rp>2\log \lp 6 \binom{p}{2}/\delta \rp$, $p\geq 3$ gives 
\begin{align}
   \frac{n}{\log^2 n}\geq \frac{72\log \lp \frac{p}{\delta}\rp}{\big(\Ith_{\dagger}-C\frac{\log n}{\sqrt{n}} \big)^2}\hspace{+0.1cm} \text{  and  }\hspace{+0.1cm} \Ith_{\dagger}>C\frac{\log n}{\sqrt{n}},
\end{align} thus it sufficient to have

\begin{align}
   &\frac{n}{\log^2 n}\geq \frac{72\log \lp \frac{p}{\delta}\rp}{\big(\Ith_{\dagger}-C\frac{\log n}{\sqrt{n}} \big)^2}\hspace{+0.1cm} \text{  and  }\hspace{+0.1cm} \frac{\Ith_{\dagger}}{2}\geq C\frac{\log n}{\sqrt{n}}, \implies\\
   &\frac{n}{\log^2 n}\geq \frac{288\log \lp \frac{p}{\delta}\rp}{\big(\Ith_{\dagger}\big)^2}\hspace{+0.1cm} \text{  and  }\hspace{+0.1cm} \frac{n}{\log^2 n}\geq \lp \frac{2C}{\Ith_{\dagger}}\rp^2.
\end{align} The last statement gives the statement of the theorem.
\end{proof}




Theorem \ref{thm:sufficient_noisy} characterizes the sample complexity for models with \textit{either} countable \textit{or} finite alphabets. By restricting our setting to finite alphabets Assumption \ref{antos_ass} is not required and we have the following result. The proof is virtually identical to that of Theorem \ref{thm:sufficient_noisy}, and is omitted.

\begin{theorem}[Finite Alphabets]\label{Rmrk:finite_alphabet_suff}
Assume that the random variable $\bnX$ (as defined in Theorem \ref{thm:sufficient_noisy}) has finite support. Fix a number $\delta\in (0,1)$. There exists a constant $C>0$, independent of $\delta$ such that, if the number of samples of $\bnX$ satisfies the inequalities
\begin{align}\label{eq:sufficient_number_of_samples_finite}
   \frac{n}{\log^2 n}\geq \frac{288\log \lp \frac{p}{\delta}\rp}{\big(\Ith_{\dagger}\big)^2}\hspace{+0.1cm} \text{  and  }\hspace{+0.1cm} n\geq \lp \frac{2C}{\Ith_{\dagger}}\rp^2,
\end{align}
then Algorithm \ref{alg:Chow-Liu} with input $\mc{D}=\bnX^{1:n}$ returns $\TCLn=\T$ with probability at least $1-\delta$.
\end{theorem}

\noindent Lastly, the corresponding variation of Theorem \ref{thm:sufficient_noisy} when Assumption \ref{antos_ass} holds for $c\in (1,2)$ (in the general case of countable alphabets) follows.
\begin{theorem}[Countable Alphabets, $c<2$]\label{THM_c>2}
Assume that $\bX\sim \p(\cdot)\in\mc{P}_{\mc{T}}$. Assume that noisy data $\bnX\sim \p_{\dagger} (\cdot)$ are generated by a randomized set of mappings $\mc{F}=\{F_i (X_i)=Y_i:i\in[p]\}$, and $\p_{\dagger} (\cdot)$ satisfies the Assumption \ref{antos_ass} for some $c\in (1,2),c_1>c_2>0$. Fix $\delta\in (0,1)$. There exists a constant $C>0$ independent of $\delta$ such that, if  $\Ith_{\dagger}>0$ and the number of samples $n$ of $\bnX$ satisfies the inequalities
\begin{align}\label{eq:sufficient_number_of_samples_noise}
   \frac{n}{\log^2 n}\geq \frac{288\log \lp \frac{p}{\delta}\rp}{\big(\Ith_{\dagger}\big)^2}\hspace{+0.1cm} \text{  and  }\hspace{+0.1cm} n\geq \lp \frac{2C}{\Ith_{\dagger}}\rp^\frac{c}{c-1}
\end{align}
 then Algorithm \ref{alg:Chow-Liu} with input the noisy data $\mc{D}=\bnX^{1:n}$ returns $\TCLn=\T$ with probability at least $1-\delta$. 
\end{theorem}


As a byproduct of our analysis we also derive the sample complexity bound for the noiseless case as well. Now the bound involves the (noiseless) information threshold $\Ith$.
\begin{theorem}\label{Thm:1}
Assume that $\bX\sim \p(\cdot)\in\treedistr$ and $\p (\cdot)$ satisfies the Assumption \ref{antos_ass} for some $c\geq 2,c_1>c_2>0$. Fix a number $\delta\in (0,1)$. There exists a constant $C>0$, independent of $\delta$ such that, if the number of samples of $\bX$ satisfies the inequality
\begin{align}\label{eq:sufficient_number_of_samples}
   \frac{n}{\log^2 n}\geq \frac{\max\{288\log \lp \frac{p}{\delta}\rp,4C^2\}}{\lp\Ith\rp^2}
\end{align}then Algorithm \ref{alg:Chow-Liu} with input $\mc{D}=\bX^{1:n}$ returns $\TCL=\T$ with probability at least $1-\delta$. The relationship between $C$ and $c,c_1,c_2$ is given similarly to that of Theorem \ref{thm:sufficient_noisy}. 
\end{theorem} 


This result allows us to compare the sample complexity of the noiseless and noisy setting. Let $n$ and $\nn$ denote the sufficient number of samples of $\bX$ and $\bnX$ respectively and consider $p,\delta$ fixed, then the ratio $n/\nn$ is $\mc{O}\big((\Ith/\Ith_\dagger)^{-2(1+\zeta)}\big) $ for all $\zeta>0$. The latter shows how $\nn$ changes relative to $n$ under the same probability of success for both settings (noiseless and noisy). 
The proof of Theorem \ref{Thm:1} is similar to the proof of Theorem \ref{thm:sufficient_noisy}.
\begin{proof}[Proof of Theorem \ref{Thm:1}] The difference is introduced by the event in \eqref{eq:MI_suff_cond}. That is, the error on the mutual information estimates should be less than the noiseless information threshold $\Ith$.
Note that for the entropy estimates of $\bX$, equations \eqref{eq:antos} up to \eqref{eq:entropy_estimate_bound} hold with possibly different constants $c',C'$ than those of the observable layer (see Assumption \ref{antos_ass}). Here we consider the case where $c'\geq 2$ (the case $c'\in (1,2)$ is similar, see also the proof of Theorem \ref{thm:sufficient_noisy}),
and the corresponding bound on the estimation error $\eps$ has to be at most $\Ith/3$. Thus, \eqref{eq:deltaI_inequality} becomes \begin{align}
\Ith>3C'n^{-1/2}\log n,
\end{align} and \eqref{eq:bound_on_prob_of_err} now is written as \begin{align}
&
\hspace{-15pt}\P \left[ \left|\hat{I}\lp X_\ell;X_{\bar{\ell}} \rp -I\lp X_\ell;X_{\bar{\ell}} \rp\right|>\Ith \right]\leq 6e^{-n\Big(\frac{\Ith}{3}-C'\frac{\log n}{\sqrt{n}} \Big)^2/2\log^2 n}.
\end{align} Finally, by applying union bound over the pairs $\ell,\bar{\ell}\in\mc{V}$ we derive the statement of Theorem \ref{thm:sufficient_noisy}, by following the equivalent steps of \eqref{eq:expr_delta} and \eqref{eq:proof_th1}. The latter completes the proof. \end{proof}

\section{Converse: Information Threshold as a Fundamental Quantity}
In this section we provide the statement of the converse of our main result Theorem \ref{thm:sufficient_noisy}. We define the class of hidden tree-structured models with bounded (from below) information threshold of the hidden layer by an absolute positive constant and bounded absolute information threshold for the observable layer by a fixed $\Delta>0$ as\footnote{The information threshold ${\bf I}_{\dagger,\mathrm{M}}^{o} $ of the observable layer can be either positive or negative for each model $M$ in the class ${\cal C}^{\cal T}$.} \begin{equation}
{\cal C}^{{\cal T}}_{\Delta}\triangleq\{\mathrm{M}: |{\bf I}_{\dagger,\mathrm{M}}^{o}| \geq \Delta>0 \}.
\end{equation} The next result provides a lower bound for the necessary number of samples for tree-structure learning from noisy observations.

\begin{theorem}\label{thm:converse_noisy}
There exist absolute constants $C,\eps_0>0$ such that for any $\bold{I}_{\min}\in (0,\eps_0)$ and for any estimator $\Phi:\bnX^{1:n}\to \mc{T}$, if $n< C/\bold{I}_{\min}^2$, then the worst-case probability of incorrect structure recovery over all hidden tree-structured models in $ {\cal C}^{{\cal T}}_{\bold{I}_{\min}}$ is at least $1/2$. In other words, it is true that if $n< C/\bold{I}_{\min}^2$, then
\begin{equation}
\inf_{\Phi:\bnX^{1:n}\to \mc{T}}\sup_{\mathrm{M}\in{\cal C}_{\bold{I}_{\min}}^{{\cal T}}}{\mbb{P}}\big(\Phi(\bnX_{\mathrm{T}_{\mathrm{M}}}^{1:n})\neq\mathrm{T}_{\mathrm{M}}\big)\ge\dfrac{1}{2}.
\end{equation}
Additionally, the supremum is attained.
\end{theorem}

\begin{figure}%
    \centering
    \subfloat[\centering Model $M_0$]{{\includegraphics[width=6cm]{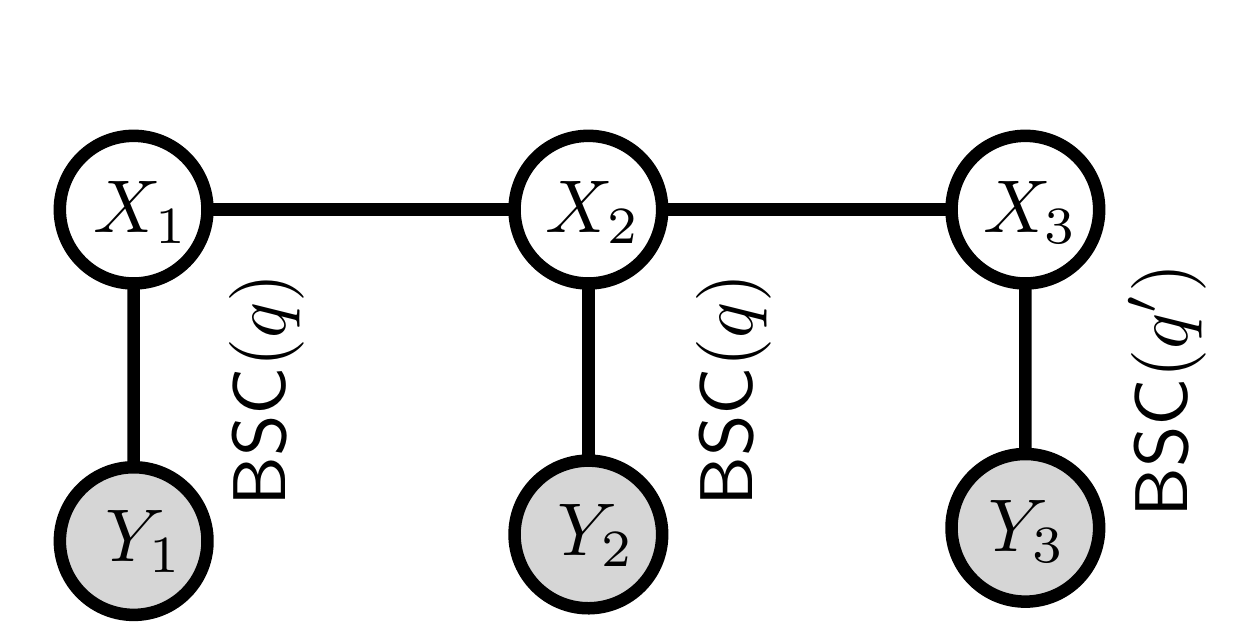} }}%
    \qquad
    \subfloat[\centering  Model $M_1$]{{\includegraphics[width=6cm]{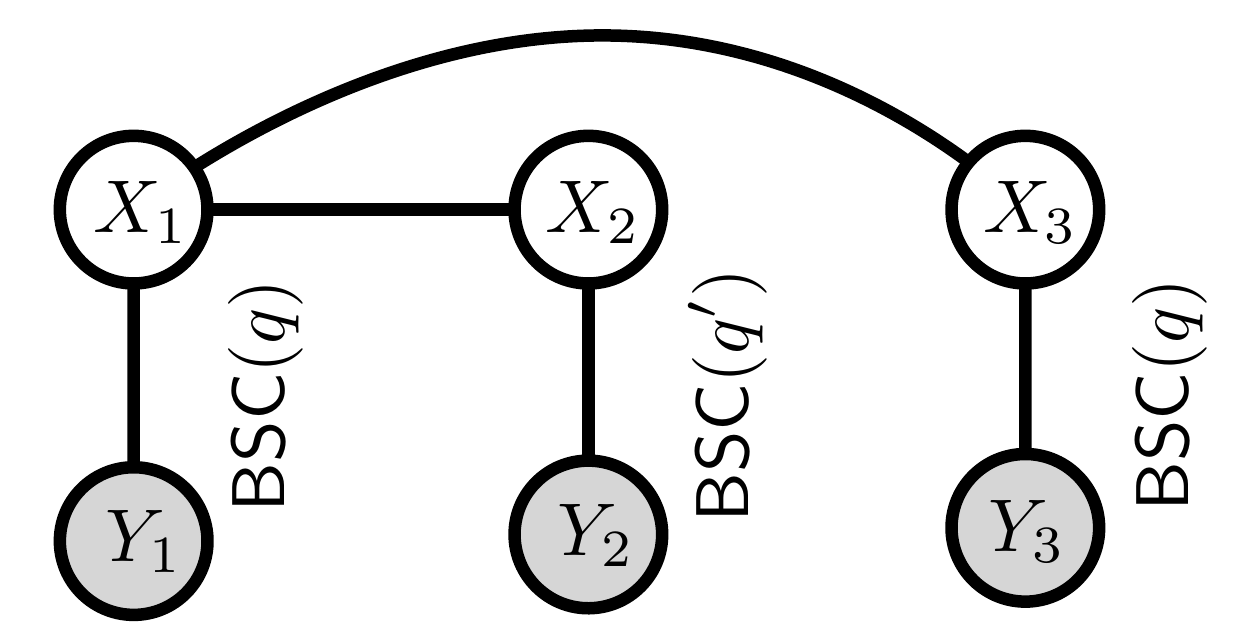} }}%
     \qquad
    \subfloat[\centering  Model $M_2$]{{\includegraphics[width=6cm]{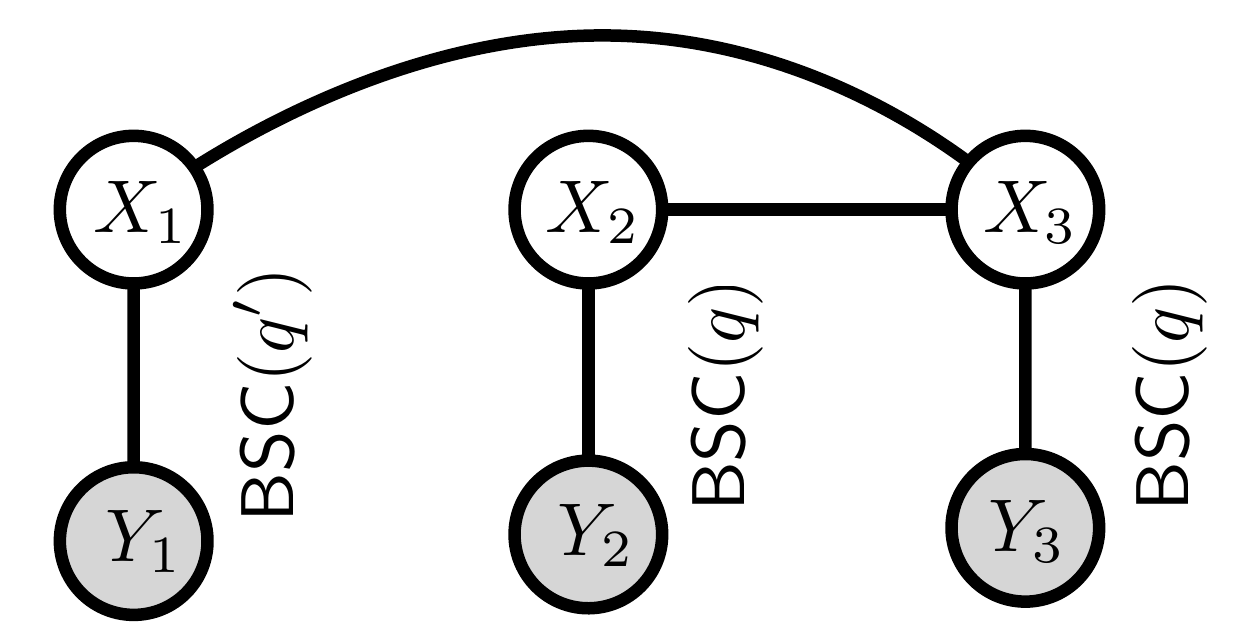} }}
    \caption{Hidden tree-structured models $M_0$, $M_1$, $M_2$}%
    \label{fig:noisy_converse}%
\end{figure}

\begin{proof}[Proof of Theorem \ref{thm:converse_noisy}]
We define the set of hidden tree-structured models $M_0$, $M_1$ and $M_2$ (see Figure \ref{fig:noisy_converse}) as follows: The hidden layer of $M_0$ is $X_1 - X_2 - X_3$, and $\mc{E}_{M_0}=\{ (1,2), (2,3) \}$, the hidden layer of $M_1$ is $X_2-X_1-X_3$ and $\mc{E}_{M_1}=\{ (1,2), (1,3) \}$, the hidden layer of $M_2$ is $X_2-X_3-X_1$ and $\mc{E}_{M_2}=\{ (2,3), (1,3) \}$. Further for all models, $X_i\in \{-1,+1\}$ for $i=1,2,3$ and $\E[X_i X_j]\triangleq c\in (0,1)$ for any $(i,j)\in\mc{E}_{M_k}$ and $k\in\{0,1,2\}$. Note that the information threshold $\Ith_{M_0}$ of the hidden layer is strictly positive for any $c\in (0,1)$, since 
\begin{align}
   \!\!\!\!\! \Ith_{M_0}\! &=\! I_{M_0}(X_1; X_2) - I_{M_0}(X_1; X_3)\nonumber \\&=\frac{1}{2} \log \lp \frac{\lp 1-\E\left[X_{1}X_{2}\right]\rp^{1-\E\left[X_{1}X_{2}\right]} \lp 1+\E\left[X_{1}X_{2}\right]\rp^{1+\E\left[X_{1}X_{2}\right]}}{\lp 1-\E\left[X_{1}X_{3}\right]\rp^{1-\E\left[X_{1}X_{3}\right]} \lp 1+\E\left[X_{1}X_{3}\right]\rp^{1+\E\left[X_{1}X_{3}\right]}}\rp, \label{eq:ITH_Hidden}
\end{align} and by construction $\Ith_{M_1} = I_{M_1}(X_1; X_2) - I_{M_1}(X_2; X_3)\equiv\Ith_{M_0}$, $\Ith_{M_2} = I_{M_2}(X_2; X_3) - I_{M_2}(X_1; X_2)\equiv\Ith_{M_0}$. The observable data are generated by binary symmetric channels with cross-over probabilities $1-q$, $1-q'$ and $q=0.75+\eps$ and $q'=0.75-\eps$. The binary noise variables $N_i\in \{-1,+1\}$ are multiplicative, independent from each other and independent from the hidden variables and generate the corresponding observable as $Y_i=N_i \times X_i$ (see Figure \ref{fig:noisy_converse}). Specifically, for the model $M_0$ we have $\P (N_1=+1)=\P (N_2=+1)=q$ and $\P (N_3=+1)=q'$, for the model $M_1$ $\P (N'_1=+1)=\P (N'_3=+1)=q$ and $\P (N'_2=+1)=q'$, and for the model $M_2$ $\P (N_2=+1)=\P (N_3=+1)=q$ and $\P (N_1=+1)=q'$.\footnote{The construction of the hidden models $M_0,M_1,M_2$ is similar to~\citet[Lemma 7.1]{bhattacharyya2020near}. The crucial difference is that our construction involves hidden layers with $\Ith>0$, while in their Lemma 7.1 the information threshold of the hidden is zero and the construction is inappropriate for the structure learning problem.} 

The distribution of any pair $Y_i,Y_j\in\{-1,+1\}$ is \begin{align}
    \p (y_i,y_j)&=\frac{1+\E[Y_i Y_j]y_i y_j}{4}, \quad y_i,y_j\in\{-1,+1\}.
\end{align} We find the second order moments $\E [Y_i Y_j]$ of the model $M_0$ as follows:
    \begin{align}
    \E [N_1]&= \E [N_2]=+1\times\lp\frac{3}{4}+\eps\rp -1\times\lp\frac{1}{4}-\eps \rp= \frac{1}{2}+2\eps,\\
    \E [N_3] &= +1\times\lp\frac{3}{4}-\eps\rp -1\times\lp\frac{1}{4}+\eps \rp= \frac{1}{2}-2\eps.\\
    \E[Y_1 Y_2]&=\E[N_1 X_1 N_2 X_2]=\E [N_1] \E [N_2] \E [X_1 X_2]= c \lp \frac{1}{2}+2\eps\rp^2,\\
    \E[Y_2 Y_3]&= \E[N_2 X_2 N_3 X_3]=c \lp\frac{1}{4} -4\eps^2 \rp,\\
    \E[Y_1 Y_3]&=\E[N_1 X_1 N_3 X_3]= c^2 \lp\frac{1}{4} -4\eps^2 \rp.
\end{align} We combine the above and we choose $c=1-\eps$ to evaluate the information threshold $\Ith_{\dagger,M_0}$\begin{align*}
    &\Ith_{\dagger,M_0}=I(Y_1;Y_2)-I(Y_1;Y_3)= H(Y_1;Y_3)-H(Y_1;Y_2)\\
    &= - \lp \frac{1+c^2 \lp \frac{1}{4}-4\eps^2 \rp}{2} \rp\log\lp \frac{1+c^2 \lp \frac{1}{4}-4\eps^2 \rp}{4} \rp\\& \quad- \lp \frac{1-c^2 \lp \frac{1}{4}-4\eps^2 \rp}{2} \rp\log\lp \frac{1-c^2 \lp \frac{1}{4}-4\eps^2 \rp}{4} \rp\\
    & \quad  + \lp \frac{1+c \lp\frac{1}{2} +2\eps \rp^2}{2} \rp\log\lp \frac{1+c \lp\frac{1}{2} +2\eps \rp^2}{4} \rp+ \lp \frac{1-c \lp\frac{1}{2} +2\eps \rp^2}{2} \rp\log\lp \frac{1-c \lp\frac{1}{2} +2\eps \rp^2}{4} \rp \\
    &= \frac{9}{8}\log\lp \frac{5}{3} \rp\eps +\mc{O}(\eps^2).\numberthis \label{eq:inf_thres_M1}
\end{align*} Similarly for the model $M_1$,
 \begin{align}
    &\E [N_1]= \E [N_3]= +1\times\lp\frac{3}{4}+\eps\rp -1\times\lp\frac{1}{4}-\eps \rp= \frac{1}{2}+2\eps,\\
    &\E [N_2] = +1\times\lp\frac{3}{4}-\eps\rp -1\times\lp\frac{1}{4}+\eps \rp= \frac{1}{2}-2\eps,\\
    &\E[Y_1 Y_2]=\E[N_1 X_1 N_2 X_2]=\E [N_1] \E [N_2] \E [X_1 X_2]= c \lp\frac{1}{4} -4\eps^2 \rp,\\
    &\E[Y_2 Y_3]= \E[N_2 X_2 N_3 X_3]=c^2 \lp\frac{1}{4} -4\eps^2 \rp,\\
    &\E[Y_1 Y_3]=\E[N_1 X_1 N_3 X_3]= c \lp \frac{1}{2}+2\eps\rp^2.
\end{align} The information threshold $\Ith_{\dagger,M_1}$ is
\begin{align*}
    &\Ith_{\dagger,M_1}=I(Y_1;Y_2)-I(Y_2;Y_3)= H(Y_2;Y_3)-H(Y_1;Y_2)\\
    &= -2 \lp \frac{1+c^2 \lp\frac{1}{4} -4\eps^2 \rp}{4} \rp\log\lp \frac{1+c^2 \lp\frac{1}{4} -4\eps^2 \rp}{4} \rp  \\&\quad-2 \lp \frac{1-c^2 \lp\frac{1}{4} -4\eps^2 \rp}{4} \rp\log\lp \frac{1-c^2 \lp\frac{1}{4} -4\eps^2 \rp}{4} \rp\\
    & \quad  +2 \lp \frac{1+c \lp\frac{1}{4} -4\eps^2 \rp}{4} \rp\log\lp \frac{1+c \lp\frac{1}{4} -4\eps^2 \rp}{4} \rp \\&\quad +2 \lp \frac{1-c \lp\frac{1}{4} -4\eps^2 \rp}{4} \rp\log\lp \frac{1-c \lp\frac{1}{4} -4\eps^2 \rp}{4} \rp \\
    &= \frac{\log\frac{5}{3}}{8}\eps+\mc{O}(\eps^2). \numberthis\label{eq:inf_thres_M2}
\end{align*} As a consequence, \eqref{eq:inf_thres_M1} and \eqref{eq:inf_thres_M2} give $\Ith_{\dagger,M_0} = \Theta(\eps)$ and $\Ith_{\dagger,M_1} = \Theta(\eps)$.  In contrast, \eqref{eq:ITH_Hidden} gives $\Ith=(\eps-1)^2/2+\mc{O}((\eps-1)^4)$. 
Further, the joint distributions of the models $M_0, M_1$ are given by \begin{align*}
    \p_{M_0}(y_1,y_2,y_3)&=\frac{1}{8}\left[1+\E[Y_1 Y_2]y_1 y_2 +\E[Y_2 Y_3]y_2 y_3+ \E[Y_1 Y_3]y_1 y_3\right]\\
    &=\frac{1}{8}\left[1+c \lp \frac{1}{2}+2\eps\rp^2 y_1 y_2 +c \lp\frac{1}{4} -4\eps^2 \rp y_2 y_3+ c^2 \lp\frac{1}{4} -4\eps^2 \rp y_1 y_3\right],\numberthis\label{eq:joint_M0}
\end{align*} \begin{align*}
    \p_{M_1}(y_1,y_2,y_3)&=\frac{1}{8}\left[1+\E[Y_1 Y_2]y_1 y_2 +\E[Y_2 Y_3]y_2 y_3+ \E[Y_1 Y_3]y_1 y_3\right]\\
    &=\frac{1}{8}\left[1+c \lp\frac{1}{4} -4\eps^2 \rp y_1 y_2 +c^2 \lp\frac{1}{4} -4\eps^2 \rp y_2 y_3+ c \lp \frac{1}{2}+2\eps\rp^2 y_1 y_3\right],\numberthis\label{eq:joint_M1}
\end{align*} and for both \eqref{eq:joint_M0}, \eqref{eq:joint_M0} $ y_1,y_2,y_3\in\{-1,+1\}$. We use \eqref{eq:joint_M0} and \eqref{eq:joint_M1} to evaluate the probabilities\begin{align*}
    &\p_{M_0}(+1,+1,+1)=\p_{M_0}(-1,-1,-1)= \frac{7}{32} + \frac{1}{8}\eps+\mc{O}(\eps^2)
    \\
    &\p_{M_0}(-1,+1,-1)=\p_{M_0}(+1,-1,+1)= \frac{3}{32} - \frac{1}{4}\eps +\mc{O}(\eps^2)
    \\
    &\p_{M_0}(-1,+1,+1)=\p_{M_0}(+1,-1,-1)=\frac{3}{32} - \frac{3}{16}\eps +\mc{O}(\eps^2)
    \\
    &\p_{M_0}(-1,-1,+1)=\p_{M_0}(+1,+1,-1)= \frac{3}{32} + \frac{5}{16}\eps +\mc{O}(\eps^2)
    \\
    &\p_{M_1}(+1,+1,+1)=\p_{M_1}(-1,-1,-1)=\frac{7}{32}+\frac{1}{8}\eps+\mc{O}(\eps^2)
    \\
    &\p_{M_1}(-1,+1,-1)=\p_{M_1}(+1,-1,+1)= \frac{3}{32} + \frac{5}{16}\eps+\mc{O}(\eps^2)
    \\
    &\p_{M_1}(-1,+1,+1)=\p_{M_1}(+1,-1,-1)= \frac{3}{32} -\frac{1}{4}\eps+\mc{O}(\eps^2)
    \\
    &\p_{M_1}(-1,-1,+1)=\p_{M_1}(+1,+1,-1)= \frac{3}{32} -\frac{3}{16}\eps+\mc{O}(\eps^2)
\end{align*} By the definition of the KL divergence \begin{align}
    \KL( p_{M_1} || p_{M_0} )=\frac{73}{24}\eps^2 + \frac{1129}{72}\eps^3 + \mc{O}(\eps^4).\label{eq:KL_order}
\end{align} Similarly for the model $M_2$ we have \begin{align}
    \Ith_{\dagger, M_2}= I_{M_2}(Y_2;Y_3)-I_{M_2}(Y_1;Y_2)=\frac{9\log(\frac{5}{3})}{8}\eps+\mc{O}(\eps^2)=\Theta (\eps),\label{eq:IthM3}
\end{align} \begin{align*}
    \p_{M_2}(y_1,y_2,y_3)=\frac{1}{8}\left[1+c \lp \frac{1}{2}+2\eps\rp^2 y_3 y_2 +c^2 \lp\frac{1}{4} -4\eps^2 \rp y_2 y_1+ c \lp\frac{1}{4} -4\eps^2 \rp y_1 y_3\right],\numberthis\label{eq:joint_M2}
\end{align*} and we find \begin{align}
    \KL( p_{M_2} || p_{M_0} )=\frac{73}{24}\eps^2 + \frac{1057}{72}\eps^3 + \mc{O}(\eps^4).\label{eq:KL_order_2}
\end{align}

Next, we use Fano's inequality (see~\citet[Corollary 2.6]{tsybakov2009introduction}).  Fix $ L = 2$ and let $\P_{M_0}$, $\P_{M_1},  \P_{M_2}$ denote the probability laws of $\bnX$ under models $M_0, M_1$ and $M_2$ respectively, and consider $n$ i.i.d. observations $\bnX^{1:n}$.  If \begin{align}
    n<  \frac{\alpha\log L}{\frac{1}{L+1}\sum^{L}_{j=1}  \KL(\P_{M_j}||\P_{M_0})}, 
\end{align} with $\alpha\in (0,1)$ then \begin{align}
    \inf_{\Phi}\max_{0\leq j\leq L} \P_{M_j} \left[ \Phi (\bnX^{1:n}) \neq \T_{M_j} \right]\geq \frac{\log (L+1)-\log(2)}{\log(L)}-\alpha,
\end{align}
where the infimum is relative to all estimators (statistical tests) $\Phi:\mc{Y}^{ n}\to \{0,1,\ldots,L\}$. Specifically, for $L=2$ and \begin{align}
    \tilde{\alpha} = \frac{\log (L+1)-\log(2)}{\log(L)}-\frac{1}{2}\in (0,1),
\end{align} if \begin{align}
    n<  \frac{\tilde{\alpha}\log 2}{\frac{2}{3} \max \{ \KL(\P_{M_1}||\P_{M_0}),\KL(\P_{M_2}||\P_{M_0}) \}  }, 
\end{align} then \begin{align}
 \inf_{\Phi}\sup_{M\in{\cal C}_{\Delta_\eps }^{{\cal T}}}{\mbb{P}}\big(\Phi(\bnX^{1:n})\neq\mathrm{T}_{M}\big)  \geq \inf_{\Phi}\max_{0\leq j\leq L} \P_{M_j} \left[ \Phi (\bnX^{1:n}) \neq \mathrm{T}_{M_j} \right]\geq \frac{1}{2}.
\end{align} Finally, $ \KL( p_{M_1} || p_{M_0} )=\mc{O}(\eps^2)$, $ \KL( p_{M_2} || p_{M_0} )=\mc{O}(\eps^2)$ from \eqref{eq:KL_order}, \eqref{eq:KL_order_2} and $\Ith_{\dagger,M_0} = \Theta(\eps)$, $\Ith_{\dagger,M_1} = \Theta(\eps)$, $\Ith_{\dagger,M_2} = \Theta(\eps)$ from \eqref{eq:inf_thres_M1}, \eqref{eq:inf_thres_M2}, \eqref{eq:IthM3}. Thus there exists $\eps_0\in (0,1)$ and $\tilde{C}>0$ such that for any $\eps\in (0,\eps_0)$ it is true that $\min\{ {\Ith_{\dagger,M_0},\Ith_{\dagger,M_1},\Ith_{\dagger,M_2}\}\geq \tilde{C}\eps}\triangleq \Delta (\eps)$. Then the statement of the theorem follows.\end{proof}

\section{Inconsistency of CL Algorithm and the Effect of Pre-Processing}\label{sec:processing}


The information threshold $\Ith_{\dagger}$ in \eqref{eq:Ith_noisy} appears in the condition for exact structure recovery using the CL algorithm \eqref{eq:MI_suff_cond3}: Algorithm \ref{alg:Chow-Liu} is consistent as long as $\Ith_{\dagger}>0$. To be more precise, for every $\Ith_{\dagger}>0$ there exists $N\in\mbb{N}$ such that if $n>N$ then \eqref{eq:MI_suff_cond3} holds with high probability. In fact, in Theorem \ref{thm:sufficient_noisy} the condition $\Ith_{\dagger}>0$ is necessary. Under the assumption of $\Ith>0$ (see Assumption \ref{ass:unique_tree}), it is not guaranteed that $\Ith_{\dagger}>0$. If $\Ith_{\dagger}<0$ then the CL is not consistent in the sense that
    \begin{align}\label{eq:inconsistency}
     \P\lp\lim_{n\to \infty}\mc{E}_{\TCL_{\dagger}(\mc{D}=\mbf{Y}^{1:n})} \neq \mc{E}_{\T}\rp=1.
\end{align}  On the other hand if $\Ith_{\dagger}=0$, then ties are broken arbitrarily and the probability of missing an edge does not decrease as $n$ increases. In what follows, we provide sufficient conditions which ensure that $\Ith_{\dagger}>0$. In particular, whenever $\Ith_{\dagger}<0$, we show that for certain hidden models, an appropriate processing can be introduced as an extra step to overcome the inconsistency of the CL algorithm. As a result, the output of CL algorithm under the appropriate processing will converge to the original tree $\T$ of the hidden layer.

First observe that if for all pairs of nodes  $\big( (k,l),(m,r)\big) \in\mc{V}^2\times \mc{V}^2$ the inequalities $ I\lp X_k;X_\ell \rp< I\lp X_m;X_r \rp$ and $ I\lp Y_k;Y_\ell \rp< I\lp Y_m;Y_r \rp$ simultaneously hold, then $\Ith_{\dagger}>0$. The last statement yields the required processing for the cases for which the algorithm is not consistent. Specifically, we would like to find weights $W_{i,j}$, that we consider as new input of the maximum spanning tree in Algorithm \ref{alg:Chow-Liu}. \begin{definition}\label{ordering_processed}
 A processing procedure with input $n$ noisy data, is called appropriate if it returns a set of edge weights $ \{ W^n_{k,\ell}\}_{(k,l)\in\mc{V}^2}$, and there exists $N\in\mbb{N}$ such that for every $\tilde{n}>N$ and every tuple $\big( (k,l),(m,r)\big) \in\mc{V}^2\times \mc{V}^2$
 \begin{align}
    & I\lp X_k;X_\ell \rp< I\lp X_m;X_r \rp \iff W^{\tilde{n}}_{k,\ell}< W^{\tilde{n}}_{m,r}.
\end{align}
\end{definition}

\noindent Such processing procedures are inherently model-based, and the required processing is tied to the underlying hidden model. In the next section, we show that in different models of interest, the noise can affect the original edge weights $I(X_i,X_j)$ in dissimilar ways; in fact, whether there exist universal rules for processing procedures for large classes of hidden models is currently unknown, and is an interesting problem for future work. 

\subsection{Relaxing the Condition $\Ith_{\dagger}>0$ for Two Non-Trivial Channels
}\label{enforcing}

\begin{figure}
\centering
\includegraphics[height=1.65in]{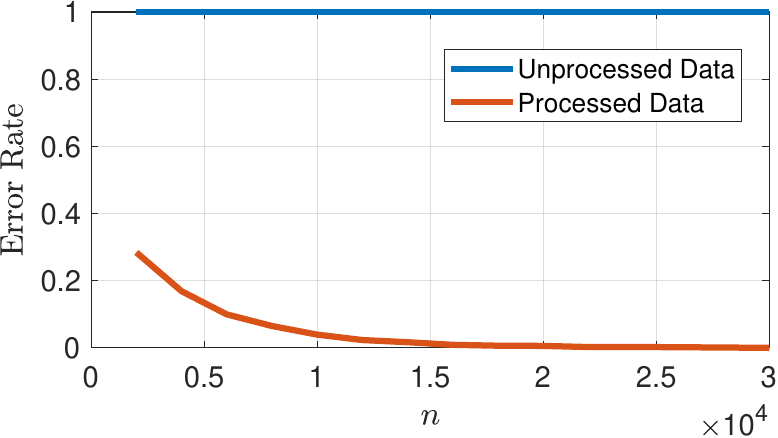}%
\hspace{+0.1cm}
\includegraphics[height=1.65in]{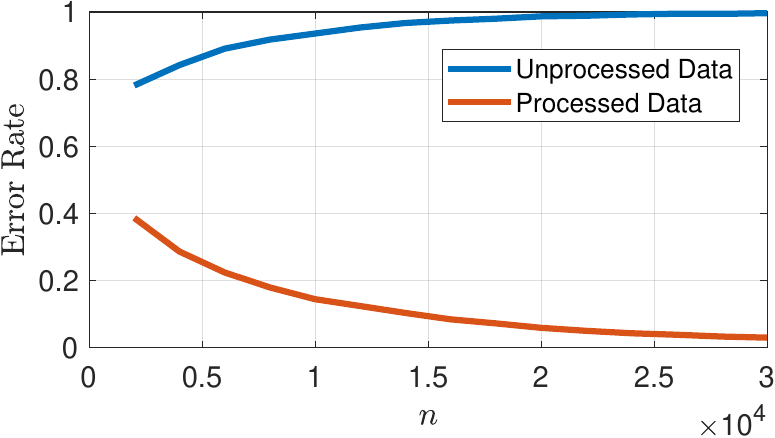}%
\caption{Estimate of the probability of incorrect recovery for a hidden Markov model with $3$ nodes, and $\Ith_{\dagger}<0$ (originally), for different values of $n\in [10^3 \times 10^4]$, before and after processing. Right: $q_1=0.01$, $q_2=q_3=0.3$, Left:  $q_1=0.2$, $q_2=q_3=0.25$.}
\label{fig:negative_Ith, simulations}
\end{figure}


To illustrate the case where $\Ith_{\dagger}<0$, we present and discuss scenarios of interest that involve hidden tree-structured models for which Algorithm \ref{alg:Chow-Liu} succeeds with high probability, only if an appropriate processing is being applied. Specifically, we provide rules for identifying the feasibility of structure recovery. In contrast with the condition $\Ith_{\dagger}>0$, these rules do not require knowledge of the structure or the unknown value $\Ith_{\dagger}$ beforehand. As a special case, the models in Sections \ref{BSC_example} and \ref{M-ary_arasure} with identically distributed noise always satisfy the conditions, and no processing is required. On the other hand, for non-identically distributed noise, structure learning from raw data may be infeasible. We show how to distinguish such models and we provide the appropriate processing under the cases of interest. 

\subsubsection{Tree-Structured Binary Data with Non-Identically Distributed Noise}\label{BSC_example}
Consider the hidden tree structure with hidden nodes $X_i$ and observable nodes $Y_i$ such that  $X_i,N_i\in\{-1,+1\}$ and $Y_i=N_i X_i$, for all $i\in\mc{V}$. The distribution of the noise is not identically distributed as \begin{align}
    \P (N_i=-1)=1-\P(N_i=+1)=q_i\in (0,1/2).
\end{align} If $I(Y_w;Y_{\bar{w}})-I(Y_u;Y_{\bar{u}})< 0$ for any tuple $(w,w),u,\bar{u}\in\fset$ then the CL is not consistent (see \eqref{eq:inconsistency}), and \eqref{eq;MI_BSC} gives \begin{align}\label{eq:ex3_cond}
    \frac{(1-q_w)(1-q_{\bar{w}})}{(1-q_u)(1-q_{\bar{u}})}
    <\frac{\E[ X_u;X_{\bar{u}}]}{\E[X_w X_{\bar{w}}]}.
\end{align} 

\noindent The inequality above involves the set $\fset$, that is associated with the unknown tree structure. Thus we cannot check if \eqref{eq:ex3_cond} holds before running the CL algorithm. However, we show that we can relax the above condition as  \begin{align}\label{eq:BSC_suff_cond}
   \!\!\!\! \frac{(1-2q_{i})}{(1-2q_{j})}\in  \lp \sqrt{\max_{(i,j)\in\mc{V}} |\mbb{E}[X_i X_j]|}, \frac{1}{\sqrt{\underset{(i,j)\in\mc{V}}{\max} |\mbb{E}[X_i X_j]|}}\rp \quad \forall i,j\in\mc{V},
\end{align} and then $\T\to\TCL_{\dagger}$. The proof of \eqref{eq:BSC_suff_cond} is given in Appendix \ref{BSC_non_id_noise}. Condition \eqref{eq:BSC_suff_cond} provides a testing rule for the feasibility of tree-structure estimation directly from raw noisy data. The advantage of \eqref{eq:BSC_suff_cond} is that it does not involve any structure related information. On the other hand, it requires the knowledge of the noise parameters $q_i,q_j$ and the maximum correlation among the hidden nodes, or some accurate estimates of these parameters. Note that for identical noise $q_i=q_j=q$, it is true that $(1-2 q_i)/(1-2 q_j)=1$ for all $i,j\in\mc{V}$, thus \eqref{eq:BSC_suff_cond} is always satisfied because  $|\mbb{E}[X_i X_j]|\in (0,1)$, and structure learning is always feasible for this regime.

If condition \eqref{eq:BSC_suff_cond} is not satisfied then structure recovery is still feasible by applying an appropriate pre-processing on the data $\bnX^{1:n}$. The pre-processing procedure requires the values $q_i$ (or estimates of them) to be known. By considering the pre-processing $Z_i=Y_i/(1-2q_i)$ on the input data of Algorithm \ref{alg:Chow-Liu}, the new weights are $W_{i,j}=\hat{I}(Z_i;Z_j)$. The latter enforces an appropriate processing (see Definition \ref{ordering_processed}) and guarantees that the algorithm is consistent, \begin{align}
     \P\lp\lim_{n\to \infty}\mc{E}_{\TCL_{\dagger}(\mc{D}=\mbf{Z}^{1:n})}= \mc{E}_{\T}\rp=1.
\end{align} 
  Simulations on synthetic data verify our analysis (see Figure \ref{fig:negative_Ith, simulations}). 

\subsubsection{$M$-ary Erasure Channel with Non-Identically Distributed Noise}\label{M-ary_arasure} Assume that the randomized mappings $F_i(\cdot)$ ``erase'' each variable independently with probability $q$, so for all $i\in [p]$, we have $Y_i = X_i$ with probability $1 - q$ and $Y_i = M+1$ (an erasure) with probability $q$. Then $I(Y_i;Y_j)=(1-q)^2I\lp X_i;X_j \rp$ for all $i,j\in\mc{V}$ (see Appendix \ref{M-ary_appendix}) and $\Ith_{\dagger}=(1-q)^2 \Ith\leq \Ith$. The latter guarantees that if $\Ith>0$ then $\Ith_{\dagger}>0$. Given the values of $p,\delta,q$ and $\Ith$, Theorem \ref{thm:sufficient_noisy} provides the sample complexity for exact structure recovery from noisy observations. For fixed values of $p$ and $\delta$, the ratio of sufficient number of samples in the noiseless and noisy settings is $\mc{O}\big((1-q)^{4(1+\zeta)}\big)$, for any $\zeta>0$. 

In contrast, consider the scenario where the erasure probability is not the same for every node (non-identically distributed noise). Each $F_i$ erases the $i^\text{th}$ node value with probability $q_i\in [0,1)$, so $I(Y_i;Y_j)=(1-q_i)(1-q_j) I\lp X_i;X_j \rp$ for all $i,j\in\mc{V}$, and the condition $\Ith_{\dagger}>0$ shows that Algorithm \ref{alg:Chow-Liu} with input $\mc{D}=\bnX^{1:n}$ converges; $\TCLn \to\T$, if for all tuples $(w,w),u,\bar{u}\in\fset$ \begin{align}\label{eq:ex2_cond}
    \frac{(1-q_w)(1-q_{\bar{w}})}{(1-q_u)(1-q_{\bar{u}})}
    >\frac{I(X_u;X_{\bar{u}})}{I(X_w;X_{\bar{w}})}.
\end{align} 
Define $\mbf{RI}\triangleq \max_{(w,\bar{w}),u,\bar{u}\in\fset}I(X_u;X_{\bar{u}})/I(X_w;X_{\bar{w}})$. Inequality \eqref{eq:ex2_cond} provides the following simplified sufficient condition for convergence of CL Algorithm with input noisy data; $\TCLn\to\T$ if for all $i,j\in\mc{V}$
    \begin{align}\label{eq:simplified_test}
    &\frac{1-q_i}{1-q_j}\in \lp \mbf{RI}^{1/2},\mbf{RI}^{-1/2}\rp.
    \end{align}

\noindent However, \eqref{eq:simplified_test} is not useful because the quantity $\mbf{RI}$ involves knowledge related to the unknown tree structure. Thus is not possible to check if \eqref{eq:simplified_test} holds before running the Chow-Liu algorithm. For that reason we would like to derive a relaxed version. To find a relaxed condition, we restrict the class of interest by introducing the following assumptions on the hidden layer $\bX$:
\begin{itemize}
    \item For a fixed known value $\Ith_{\min}$, it is true that $\Ith\geq\Ith_{\min}>0$.
    \item For a fixed known value $I_{\max}$ the inequality $ I_{\max} \geq I(X_i;X_j)$ holds for all $(i,j)\in\mc{E}$.
\end{itemize} Then condition \eqref{eq:simplified_test} can be relaxed as 
\begin{align}
    \frac{1-q_i}{1-q_j}\in \lp \sqrt{\frac{I_{\max}-2\Ith_{\min}}{I_{\max}}}, \sqrt{\frac{I_{\max}}{I_{\max}-2\Ith_{\min}}}  \rp. \label{eq:simplified_test2}
\end{align} Given the values $q_i,q_j$ and $\Ith_{\min}$, $I_{\max}$ or accurate estimates of them, \eqref{eq:simplified_test2} provides a rule for testing if structure estimation is possible directly from raw noisy data. Additionally, the above assumption on $\Ith$ and $I(X_i,X_j)$ for edge pairs $(X_i,X_j)$ is equivalent with an assumption on strong and weak edges of the tree, which is common in the literature (for instance see related work by~\cite{bresler2020learning}.)

If direct structure estimation is not possible then we can consider an appropriate processing by considering as edge weights the values $W_{i,j}=\hat{I}(Y_i;Y_j)/(1-q_i)(1-q_j)$. Under the new input $\mc{W}$ of the MST, the algorithm becomes consistent by satisfying the property of the appropriate processing in Definition \ref{ordering_processed}. On the other hand, for identical distributed noise for each observable node ($q_i=q$, for all $i\in\mc{V}$), the condition \eqref{eq:simplified_test2} always holds, thus structure recovery is always possible for identically distributed noise. Lastly, a comparison of the required processing for $M-ary$ erasure channels and binary symmetric channels with non-identically distributed noise (Section \ref{BSC_example}), indicates that different processing procedures are required for different models.

\begin{figure*}[t!]
\centering
\includegraphics[width=215pt, valign=t]{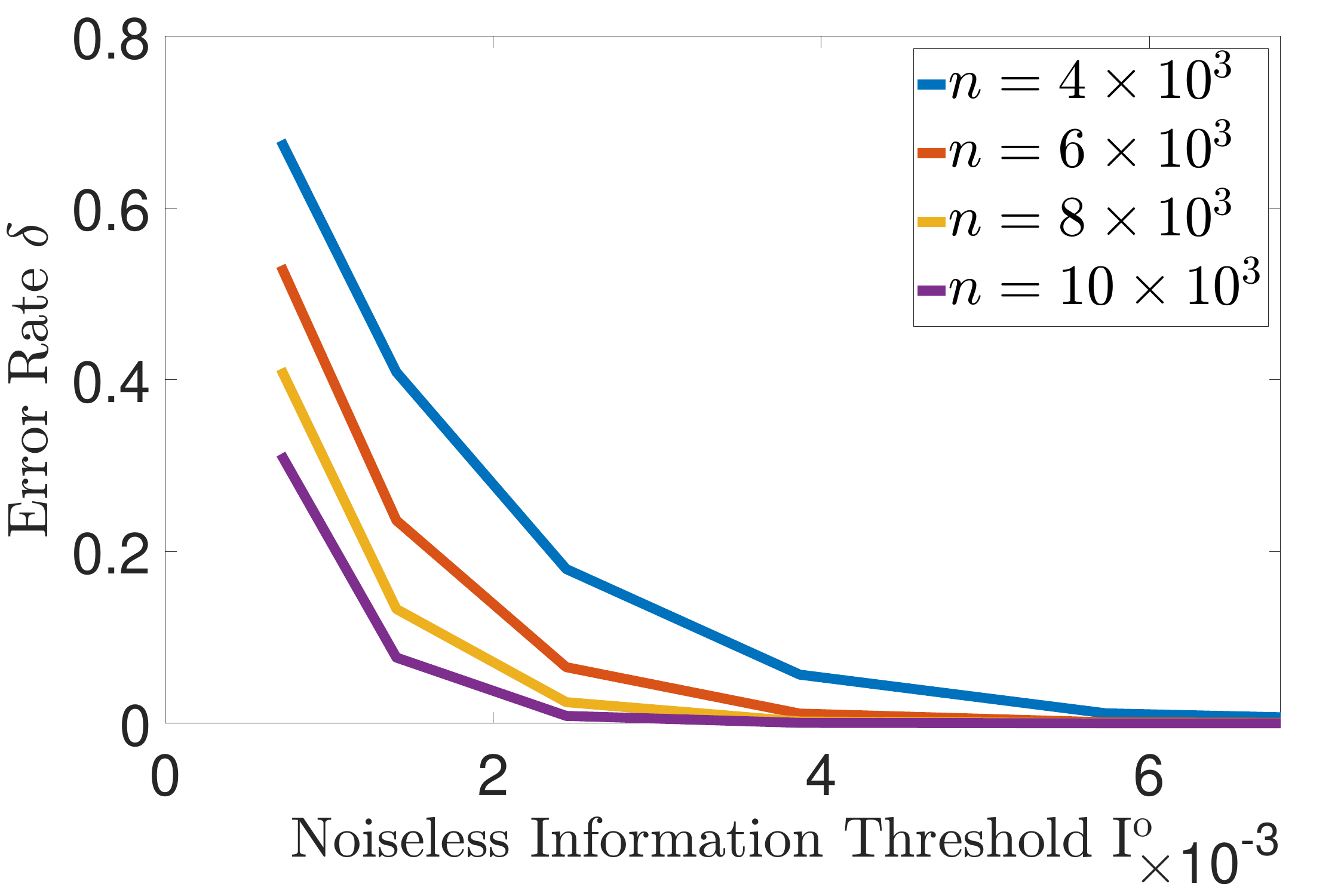}
 \includegraphics[width=215pt,valign=t]{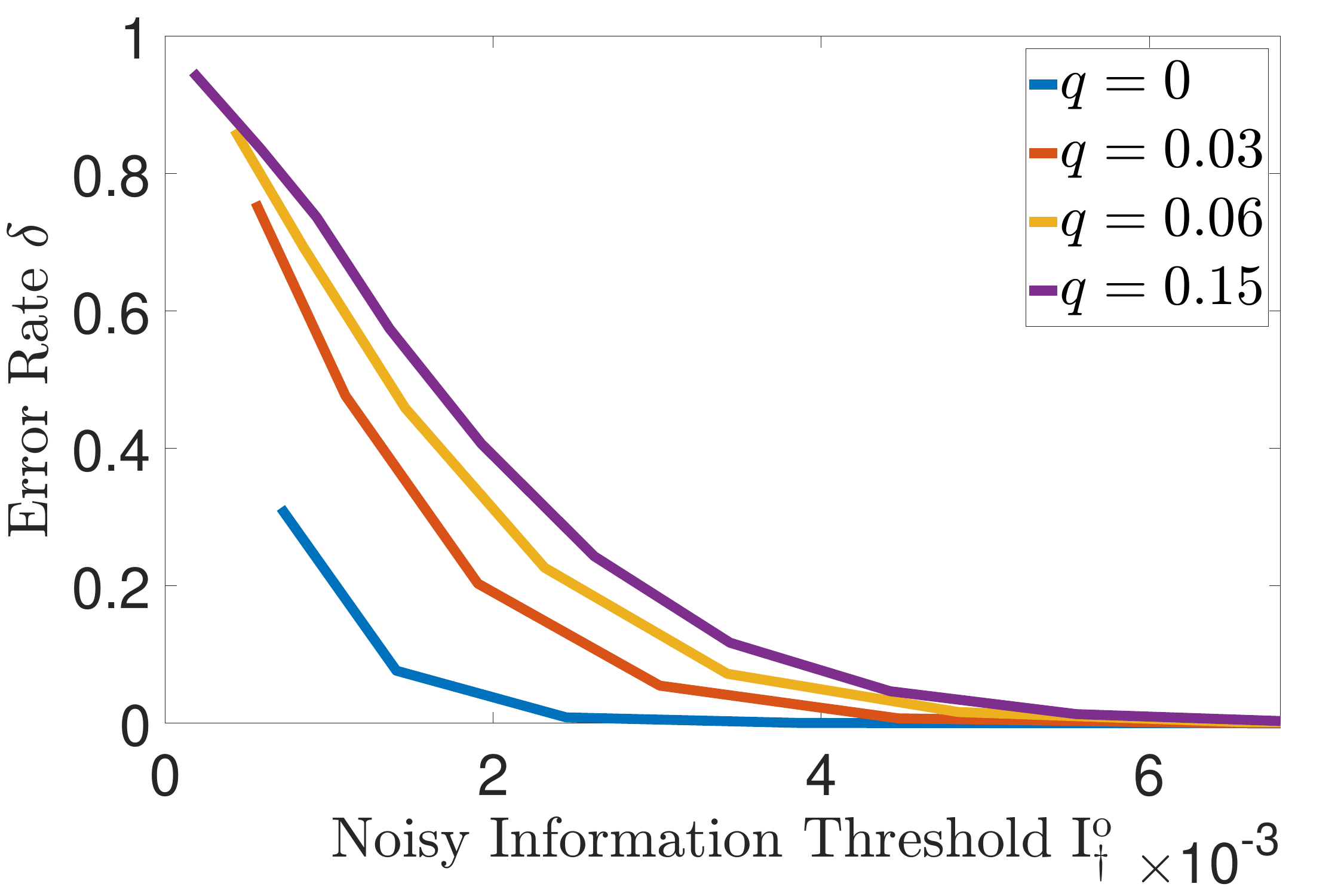}
 \includegraphics[width=215pt,valign=t]{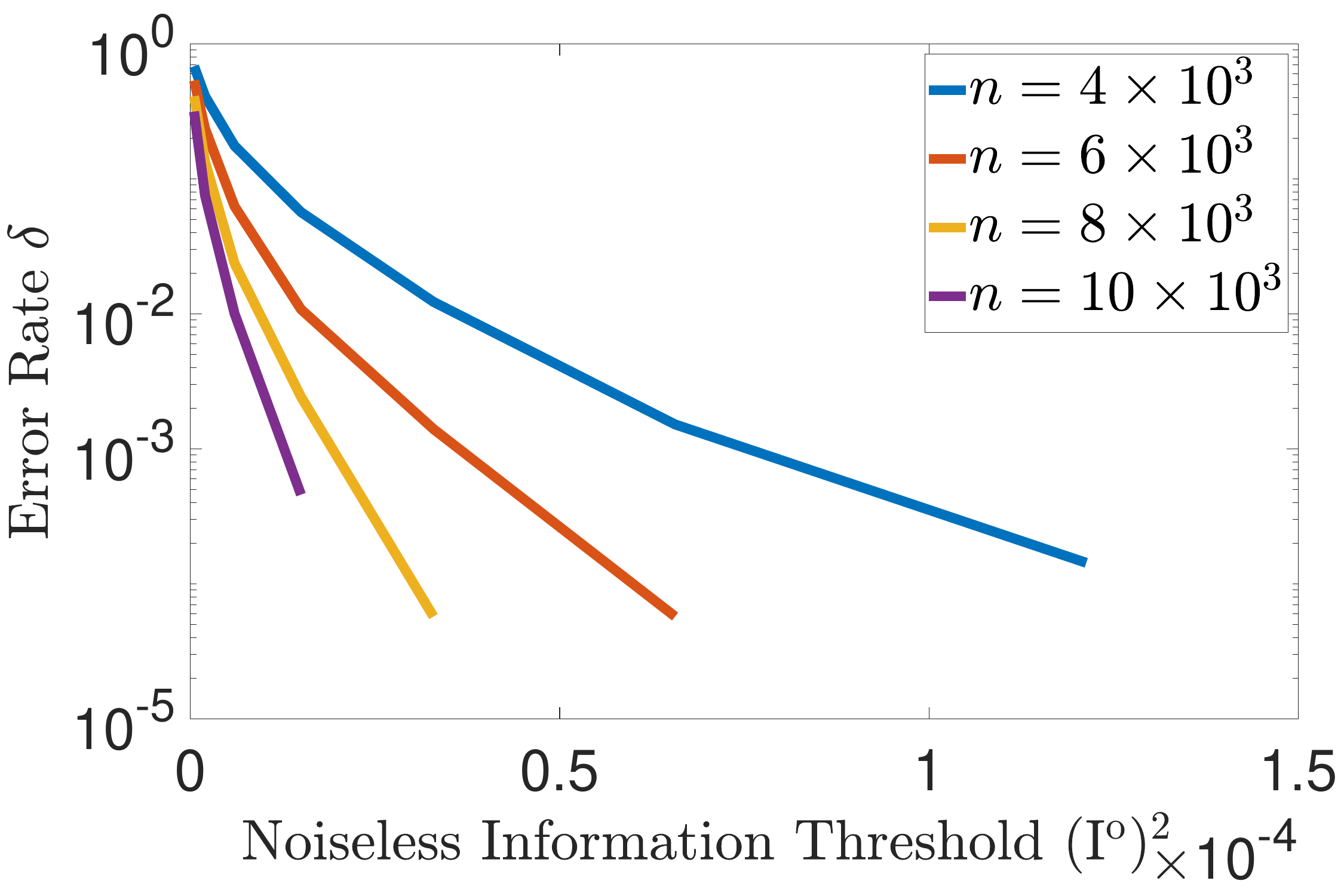}
 \includegraphics[width=215pt,valign=t]{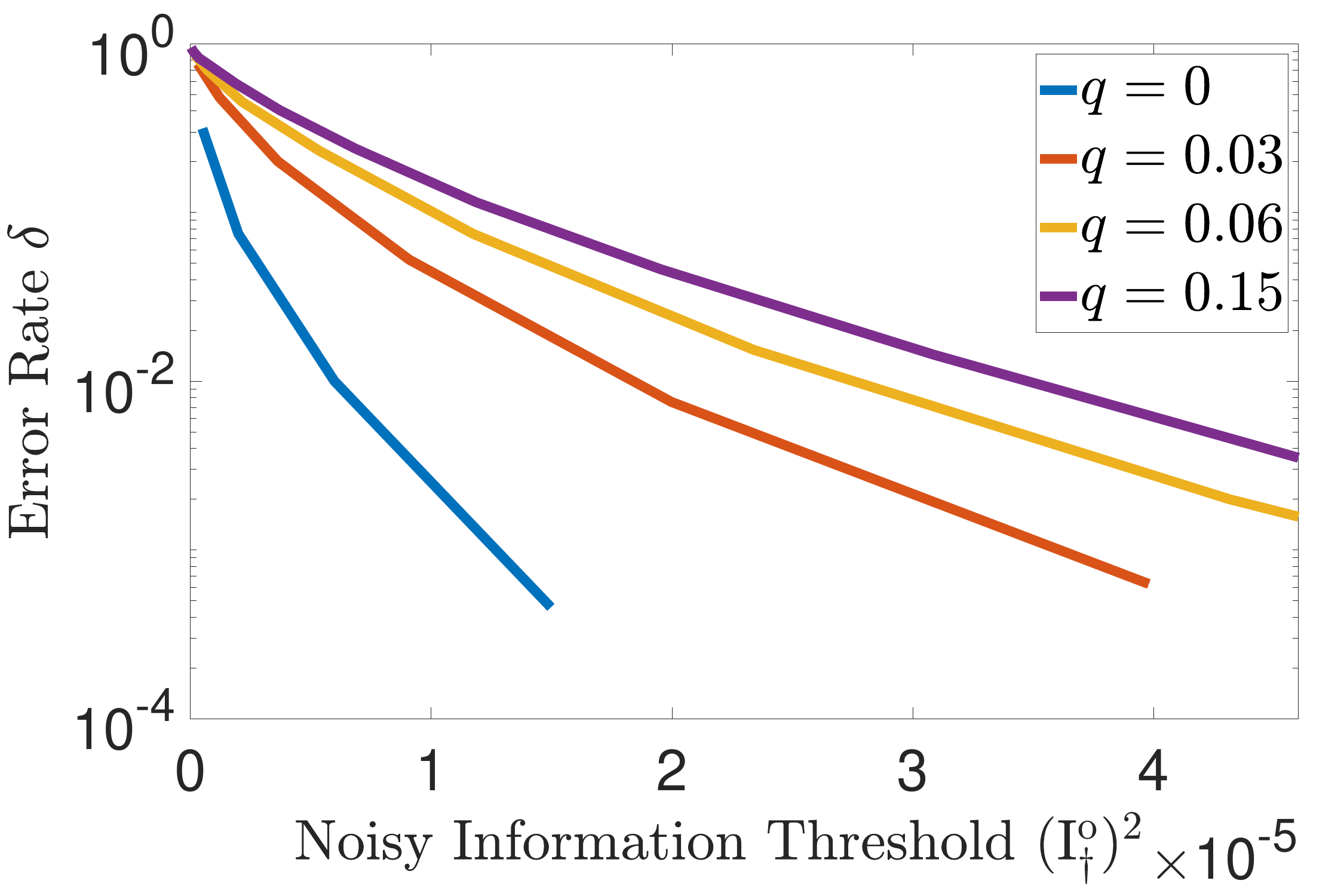}
\caption{Left: estimating the probability $\P\lp\TCL\neq \T \rp$ for different values of $\Ith$ and $n$ through $3.5\times10^4$ independent runs with noiseless data. Right: estimating the probability $\P\big(\TCL_{\dagger}\neq \T \big)$ for different values of $\Ith_{\dagger}$ and $q$, $n=10^4$, samples and $3.5\times10^4$ independent runs with noisy data. The noisy information threshold $\Ith_{\dagger}$ depends on moments of the hidden layer, as well as on the value $q$. As a consequence, for different values of $q$ the points of x-axis are slightly different. Further, the case $q=0$ gives the noiseless setting $\Ith_{\dagger}\equiv \Ith$. The code is available at \url{https://github.com/KonstantinosNikolakakis/Information-Threshold}.  \label{fig:f1}}
\end{figure*}

\section{Simulations}
In this Section, we provide simulations to illustrate the dependence between the error rate and the corresponding value of the information threshold. For the simulations of this paper, the tree-structured of the hidden layer is generated randomly. Starting from the root, we choose the parent of each new node uniformly at random among the nodes that are currently in the tree, in a sequential fashion. To demonstrate the relationships between $\delta$, $n$ and $\Ith$, $\Ith_{\dagger}$,  we estimate $\TCL,\TCL_{\dagger}$ and $\delta$ through $3.5\times10^4$ independent runs on tree-structured synthetic data, for different values of $n\in [10^3 ,10^4 ]$ and $p=10$. The variable nodes are binary and take values in the set $\{-1,+1\}$. Figure \ref{fig:f1} (left) illustrates the relationship between the probability of incorrect reconstruction and $\Ith$. 

Lastly, Figure \ref{fig:f1} (right) presents the effect of noise of a BSC for different values of the crossover $q$ and number of samples $n=10^4$. Notice that the probability of incorrect reconstruction also decays exponentially with respect to the noisy information threshold, as Theorem \ref{thm:sufficient_noisy} suggests, but with a significantly smaller rate than the noiseless case ($q=0$). To derive different values of $\Ith_{\dagger}$ for the purpose of the experiment, we consider a range for the correlations between the hidden variables, while the parameter of the noise is considered fixed ($q= 0, 0.03, 0.06, 0.15$). We observe that probability of incorrect reconstruction decays exponentially with respect to the noisy information threshold, as Theorem \ref{thm:sufficient_noisy} suggests.

\section{Conclusion \& Future Directions}\label{coclusion}

In this paper we showed how the \emph{information threshold} characterizes the problem of recovering the structure of hidden tree-shaped graphical models. This quantity arises naturally in the error analysis of Chow-Liu algorithm. As our main contribution, we introduced the first finite sample complexity bound on the performance of the CL algorithm for learning tree structured models from noisy data, while the alphabet size is up to countable, while the models are general. More specifically, the sufficient number of samples bound shows how the number of nodes $p$, the probability of failure $\delta$, and $\Ith_{\dagger}$ (the information threshold) are related for the problem of structure recovery. In fact, we provide matching rates (upper and lower sample complexity bounds) with respect to information threshold. Consequently, the CL algorithm achieves an optimal rate with respect to the statistic $\Ith_{\dagger}$. Our results also demonstrate how noise affects the sample complexity of learning for a variety of standard models, including models for which the noise is not identically distributed. 

Although we strictly consider the class of tree-structured models in this paper, our approaches of Theorem \ref{thm:sufficient_noisy} and Theorem \ref{thm:converse_noisy} can be extended to the class of forests. For that purpose, we should consider a generalization of $\Ith$ to forests and a modified version of CL the CLThres algorithm~\citep{tan2011learning}. We leave this part for future as it is out of the scope of this paper.

Additionally, our approach is more generally applicable to the analysis of $\delta$-PAC Maximum Spanning Tree (MST) algorithms. At its root, our work shows how the error probability of MST algorithms (for example, Kruskal's algorithm or Prim's algorithm) behaves when edge weights are uncertain, i.e., when only (random) estimates of the true edge weights are known.

To conclude, the non-parametric graphical model setting presents interesting theoretical challenges that are connected with other statistical problems, out of the focus of this paper. The relationship between $\Ith$ and $\Ith_{\dagger}$ is connected with open problems in information theory related to Strong Data Processing Inequalities~\citep{raginsky2016strong,polyanskiy2017strong}, for which tight characterizations are only known for a few channels. In our situation, a general analytical relationship may be similarly challenging. From a practical standpoint, we may wish to estimate the sample size needed to guarantee recovery with a pre-specified error probability. Doing so would require knowing $\Ith_{\dagger}$ before collecting the full data; since $\Ith_{\dagger}$ depends on the noise \emph{model}, we could find such a bound by considering a reasonable class of underlying models and taking the worst case. An interesting open question for future work is how to effectively estimate $\Ith_{\dagger}$ from (auxiliary) training data rather than relying on such a priori modeling assumptions. This may help design appropriate processing methods that can make structure learning algorithms more robust against noise or adversarial attacks.



\appendix


\section{Proofs \& Results }\label{Appendix}
We start by providing the proofs of Propositions \ref{prop1} and \ref{prop_locality}.
\subsection{Proof of Proposition \ref{prop1}}
We consider the case $u^*\equiv w^*$ and $\bar{u}\in\mc{N}_{\T}(\bar{w})$, while the other three cases that are given by the locality property can be identically proved. The case $u^*\equiv w^*$ and $\bar{u}\in\mc{N}_{\T}(\bar{w})$ implies that $w^* - \bar{w}^* -\bar{u}^*$ is a subgraph of $\T$. Assume for sake of contradiction that $I(X_{w^*};X_{\bar{w}^*})=I(X_{w^*};X_{\bar{u}^*})$ then $I(X_{w^*};X_{\bar{w}^*}|X_{\bar{u}^*})=0$. The latter implies that $w^*  -\bar{u}^* - \bar{w}^*$ is also a subgraph of $\T$ and it contradicts with the uniqueness of the structure (Assumption \ref{ass:unique_tree}).\QEDB

\subsection{Proof of Proposition \ref{prop_locality}}

Assume for sake of contradiction that $u^*\neq w^*$ and $u^*\neq \tilde{w}^*$ or $\bar{u}\notin\mc{N}_{\T}(w)$ and $\bar{u}\notin\mc{N}_{\T}(\bar{w})$ and let $\nu$ be a node such that $\nu\in\mc{N}_{\T}(w)\cup\mc{N}_{\T}(\bar{w})$, then the data processing inequality~\citep{cover2012elements} and Assumption \ref{ass:unique_tree} give \begin{align}
   I(X_{w};X_{\bar{w}})-I(X_{\bar{w}};X_{\nu})  < I(X_{w};X_{\bar{w}})-I(X_{u};X_{\bar{u}})
\end{align} and \begin{align}
    I(X_{w};X_{\bar{w}})-I(X_{w};X_{\nu})  < I(X_{w};X_{\bar{w}})-I(X_{u};X_{\bar{u}}).
\end{align} The last two inequalities contradict the assumption \eqref{eq:argmin}.\QEDB

\section{Fano's Inequality}\label{Appendix_converse}

\begin{lemma}[\textbf{Fano's Inequality, \citep{tsybakov2009introduction}}]\label{Fano's_method}
  Fix $M\geq 2$ and let $\Theta$ be a family of models $\theta^0,\theta^1,\ldots,\theta^M$. Let $\P_{\theta^j}$ denote the probability law of $\bX$ under model $\theta^j$, and consider $n$ i.i.d. observations $\bX^{1:n}$.  If \begin{align}
    n< (1-\delta) \frac{\log M}{\frac{1}{M+1}\sum^{M}_{j=1}  \KL(\P_{\theta^j}||\P_{\theta^0})}, 
\end{align} then it is true that\begin{align}
    \inf_{\Phi}\max_{0\leq j\leq M} \P_{\theta^j} \left[ \Phi (\bX^{1:n}) \neq j \right]\geq \delta -\frac{1}{\log (M)},
\end{align}
where the infimum is relative to all estimators (statistical tests) $\Phi:\mc{X}^{p\times n}\to \{0,1,\ldots,M\}$.
\end{lemma}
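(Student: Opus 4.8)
The plan is to prove this via the standard \emph{Fano's method} for minimax lower bounds~\cite{tsybakov2009introduction}, combining the discrete (information-theoretic) Fano inequality with a convexity bound on the mutual information. First I would introduce an auxiliary index $J$ drawn uniformly from $\{0,1,\ldots,M\}$, and let the data satisfy $\bX^{1:n}\sim\P_{\theta^J}^{\otimes n}$ conditionally on $J$. Writing $\hat{J}\triangleq\Phi(\bX^{1:n})$, the triple $J\to\bX^{1:n}\to\hat{J}$ forms a Markov chain, and the averaged error $P_e\triangleq\P(\hat{J}\neq J)=\frac{1}{M+1}\sum_{j=0}^{M}\P_{\theta^j}[\Phi(\bX^{1:n})\neq j]$ is dominated by the maximal error appearing in the statement.

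The first step would be to invoke the discrete Fano inequality. Since $J$ ranges over $M+1$ values,
\begin{align}
H(J\mid\hat{J})\le h_b(P_e)+P_e\log M\le 1+P_e\log M,
\end{align}
where $h_b$ denotes the binary entropy. The second step lower-bounds the same quantity: as $J$ is uniform, $H(J)=\log(M+1)$, so $H(J\mid\hat{J})=\log(M+1)-I(J;\hat{J})$, and the data-processing inequality yields $I(J;\hat{J})\le I(J;\bX^{1:n})$.

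The crux is the third step, controlling $I(J;\bX^{1:n})$ by the prescribed divergences. With $\bar{\P}\triangleq\frac{1}{M+1}\sum_{j}\P_{\theta^j}^{\otimes n}$ the induced mixture, one has $I(J;\bX^{1:n})=\frac{1}{M+1}\sum_{j=0}^{M}\KL(\P_{\theta^j}^{\otimes n}\,\|\,\bar{\P})$. I would then use the barycenter property of the mixture---that $\bar{\P}$ minimizes the average KL divergence among all reference measures---to replace $\bar{\P}$ by the single reference $\P_{\theta^0}^{\otimes n}$, and tensorize the KL divergence over the $n$ i.i.d.\ coordinates:
\begin{align}
I(J;\bX^{1:n})\le\frac{1}{M+1}\sum_{j=0}^{M}\KL(\P_{\theta^j}^{\otimes n}\,\|\,\P_{\theta^0}^{\otimes n})=\frac{n}{M+1}\sum_{j=1}^{M}\KL(\P_{\theta^j}\,\|\,\P_{\theta^0}),
\end{align}
the $j=0$ term vanishing.

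Finally I would assemble the pieces. Abbreviating $\bar{K}\triangleq\frac{1}{M+1}\sum_{j=1}^{M}\KL(\P_{\theta^j}\|\P_{\theta^0})$, the three steps combine into $\log(M+1)-n\bar{K}\le 1+P_e\log M$, whence $P_e\ge(\log(M+1)-1-n\bar{K})/\log M$. Using $\log(M+1)>\log M$ together with the hypothesis $n\bar{K}<(1-\delta)\log M$ yields $P_e>\delta-1/\log M$; since the maximal error dominates $P_e$ and the resulting lower bound is uniform in $\Phi$, taking $\inf_{\Phi}$ preserves it. I expect the main obstacle to be the third step: the barycenter inequality bounding the mutual information by the KL divergences to the fixed reference $\theta^0$, and the careful bookkeeping of $\log M$ versus $\log(M+1)$, are where the argument genuinely requires attention, the remaining manipulations being routine.
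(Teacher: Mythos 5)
Your proof is correct and follows the canonical Fano's-method argument: uniform prior on the index, the discrete Fano inequality with alphabet size $M+1$ (hence the $\log M$ term), data processing, the barycenter/mixture bound on $I(J;\bX^{1:n})$ replaced by KL divergences to the fixed reference $\P_{\theta^0}^{\otimes n}$, and tensorization. The paper does not prove this lemma at all—it imports it by citation from Tsybakov—and your argument is essentially the proof given in that cited reference, so there is nothing to reconcile.
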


\subsection{$M$-ary Erasure Channel}\label{M-ary_appendix}
\noindent For the $M$-ary erasure channel, it is true that
    \begin{align}
        I(Y_i;Y_j)=(1-q_i)(1-q_j)I\lp X_i;X_j \rp.
    \end{align} for all $i,j\in\mc{V}$ and $q_i,q_j\in[0,1)$. To prove this, we start by expanding the mutual information from the definition and pulling out the erasure event as follows
\begin{align*}
    &I(Y_i;Y_j)\\
    &=\sum_{y_i,y_j\in[M+1]^2}          
        \p_{\dagger}(y_i,y_j)\log\frac{\p_{\dagger}(y_i,y_j)}{\p_{\dagger}(y_i)\p_{\dagger}(y_j)}\\
    &=\sum_{y_i,y_j\in[M]^2}
        \p_{\dagger}(y_i,y_j)\log\frac{\p_{\dagger}(y_i,y_j)}{\p_{\dagger}(y_i)\p_{\dagger}(y_j)}
    +\sum_{y_i\in[M]}\p_{\dagger}(y_i,M+1)\log\frac{\p_{\dagger}(y_i,M+1)}{\p_{\dagger}(y_i)\p_{\dagger}(M+1)}
    \\
    &\quad +\sum_{y_j\in[M]}\p_{\dagger}(M+1,y_j)\log\frac{\p_{\dagger}(M+1,y_j)}{\p_{\dagger}(M+1)\p_{\dagger}(y_j)}+\p_{\dagger}(M+1,M+1)\log\frac{\p_{\dagger}(M+1,M+1)}{\p_{\dagger}(M+1)\p_{\dagger}(M+1)}\\
    &=\sum_{y_i,y_j\in[M]^2} \p_{\dagger}(y_i,y_j)\log\frac{\p_{\dagger}(y_i,y_j)}{\p_{\dagger}(y_i)\p_{\dagger}(y_j)}\\
    &=\sum_{x_i,x_j\in[M]^2}(1-q_i)(1-q_j)\p(x_i,x_j)\log\frac{\p(x_i,x_j)}{\p(x_i)\p(x_j)}\\
    &=(1-q_i)(1-q_j)I\lp X_i;X_j \rp.\numberthis \label{eq:erasure_proof_1}
\end{align*} 
An erasure occurs independently on each node variable observable and independently with respect to the $\bX$, thus  $\p_{\dagger}(y_i,M+1)=\p_{\dagger}(y_i)\p_{\dagger}(M+1)$, for any $y_i\in[M+1]$ and  $\p_{\dagger}(M+1,y_j)=\p_{\dagger}(M+1)\p_{\dagger}(y_j)$ for any $y_j\in[M+1]$. The latter gives \eqref{eq:erasure_proof_1}. \QEDB

\subsection{Binary Symmetric Channel with Non-Identically Distributed Noise}\label{BSC_non_id_noise}

\noindent Under the assumption of $\Ith>0$, we wish to show that if \begin{align}\label{eq:sufficient}
   \!\!\!\! \frac{(1-2q_{i})}{(1-2q_{j})}\in  \lp \max_{(i,j)\in\mc{E}_{\T}} |\mbb{E}[X_i X_j]|, \frac{1}{\underset{(i,j)\in\mc{E}_{\T}}{\max} |\mbb{E}[X_i X_j]|}\rp, 
\end{align} for all $i,j\in\mc{V}$ then $\Ith_{\dagger}>0$. We start by finding the values of the sequence of crossover probabilities $q_1,q_2,\ldots,q_k\in[0,1/2)$ which guarantee that $\Ith_{\dagger}>0$. The mutual information of two binary random variables $Y_i,Y_j\in\{-1,+1\}$ (see~\citep{nikolakakis2021predictive}) is
    \begin{align}\label{eq:MI_binary}
	&I\left(Y_{i},Y_{j}\right)\\
	&=\frac{1}{2} \log_{2} \lp \lp 1-\E\left[Y_{i}Y_{j}\right]\rp^{1-\E\left[Y_{i}Y_{j}\right]} \lp 1+\E\left[Y_{i}Y_{j}\right]\rp^{1+\E\left[Y_{i}Y_{j}\right]}\rp.\nonumber
	\end{align} 

\noindent The definition of $\Ith_{\dagger}$ (Definition \ref{def:Ith_noisy}) and \eqref{eq:MI_binary} give \begin{align}
&\Ith_{\dagger}= \frac{1}{2}  \left\{I\lp Y_w;Y_{\bar{w}} \rp - I\lp Y_u;Y_{\bar{u}} \rp\right\}\label{eq;MI_BSC}\\
&\!\!=\frac{1}{2}  \log_{2} \! \frac{\lp 1-\E\left[ Y_w Y_{\bar{w}}  \right]\rp^{1-\E\left[Y_w Y_{\bar{w}}\right]} \lp 1+\E\left[Y_w Y_{\bar{w}}\right]\rp^{1+\E\left[Y_w Y_{\bar{w}}\right]}}{\lp 1-\E\left[Y_u Y_{\bar{u}}\right]\rp^{1-\E\left[Y_u Y_{\bar{u}}\right]} \lp 1+\E\left[Y_u Y_{\bar{u}}\right]\rp^{1+\E\left[Y_u Y_{\bar{u}}\right]}}.\nonumber
\end{align}

\noindent Define the function $f(\cdot)$ as
    \begin{align}
    f(x)\triangleq \lp 1-x\rp^{1-x} \lp 1+x\rp^{1+x}\equiv f(|x|),
    \end{align} 
then 
    \begin{align}
   &\Ith_{\dagger}= \frac{1}{2}  \log_{2} \frac{f(\left|\E\left[ Y_w Y_{\bar{w}}  \right]\right|)}{f(|\E\left[ Y_u Y_{\bar{u}}  \right]|)}
    \end{align} and \begin{align}
       \E\left[ Y_u Y_{\bar{u}}  \right] = (1-2q_w)(1-2q_{\bar{w}}) \E\left[ X_w X_{\bar{w}}  \right],
    \end{align}\begin{align}
       \E\left[ Y_w Y_{\bar{w}}  \right]&= (1-2q_w)(1-2q_{\bar{u}}) \E\left[ X_w X_{\bar{w}}  \right]\times\prod_{(i,j)\in\tpath_{\T}(u.\bar{u})\setminus(w,\bar{w})}\E\left[ X_i X_j  \right]
    \end{align}
for the last equality we used the correlation decay property~\citep{bresler2020learning,nikolakakis2019learning,nikolakakis2021predictive}) and the fact that for $\pm 1$-valued variables the binary symmetric channel can be consider as multiplicative binary noise~\citep{nikolakakis2019learning,nikolakakis2021predictive}. 
Note that $f(x)$ is increasing for $x>0$. To guarantee that $ \Ith_{\dagger}>0$ we need 
    \begin{align}
   \!\!\!\! \frac{(1-2q_w)(1-2q_{\bar{w}})}{(1-2q_u)(1-2q_{\bar{u}})}>\!\!\!\!\prod_{(i,j)\in\tpath_{\T}(u.\bar{u})\setminus(w,\bar{w})}|\E\left[ X_i X_j  \right]|.\label{eq:condition_on_qi}
\end{align}
Recall that \eqref{eq:condition_on_qi} should hold for all $(w,\bar{w})\in\mc{E}$ and for all $u,\bar{u}\in\mc{V}$ such that $(w,\bar{w})\in\tpath_{\T}(u,\bar{u})$. 

\noindent In addition, \begin{align*}
   \prod_{(i,j)\in\tpath_{\T}(u.\bar{u})\setminus(w,\bar{w})}|\E\left[ X_i X_j  \right]|\leq \max_{(i,j)\in\mc{V}} |\mbb{E}[X_i X_j]|.
\end{align*} The last two inequalities give the sufficient condition\begin{align*}
    \frac{(1-2q_w)(1-2q_{\bar{w}})}{(1-2q_u)(1-2q_{\bar{u}})}&>\max_{(i,j)\in\mc{V}} |\mbb{E}[X_i X_j]|.
\end{align*}

\noindent As a consequence, if  \begin{align*}
   \!\!\!\! \frac{(1-2q_{i})}{(1-2q_{j})}\in  \lp \sqrt{\max_{(i,j)\in\mc{V}} |\mbb{E}[X_i X_j]|}, \frac{1}{\sqrt{\underset{(i,j)\in\mc{V}}{\max} |\mbb{E}[X_i X_j]|}}\rp, 
\end{align*} for all $i,j\in\mc{V}$ then $\Ith_{\dagger}>0$. Note that for the case of i.i.d. noise $(q_{\bar{i}}=q_{\bar{j}}$ for all $i,j\in\mc{V}$) the inequality always holds because $\max_{(i,j)\in\mc{V}} |\mbb{E}[X_i X_j]|\in (0,1)$.\QEDB

\clearpage
\bibliography{Structure_learning_merged}
\end{document}